\documentclass[11pt]{article}

\usepackage[total={6.5in, 9.5in}]{geometry}

\usepackage[T1]{fontenc} 
\usepackage[utf8]{inputenc}

\usepackage[labelfont=bf]{caption}
\usepackage[format=hang]{subcaption}

\usepackage{hyperref}
\usepackage{url}
\usepackage{wrapfig}

\usepackage{amsmath}
\usepackage{amssymb}
\usepackage{amsthm}
\usepackage{xfrac}

\usepackage{graphicx}
\usepackage[table]{xcolor}
\usepackage{tikz}
\usetikzlibrary{bayesnet}

\usepackage[labelfont=bf]{caption}
\usepackage[format=hang]{subcaption}

\usepackage{booktabs}
\usepackage{multirow}
\usepackage{arydshln}
\usepackage{makecell}

\usepackage{algorithm}
\usepackage{algpseudocode}

\usepackage{enumitem}

\usepackage{siunitx}
\usepackage{textcomp}
\usepackage{xurl}
\usepackage[acronym]{glossaries}

\usepackage{cleveref}
\usepackage{natbib}

\crefname{section}{sec.}{secs.}
\Crefname{section}{Sec.}{Secs.}

\crefname{figure}{fig.}{figs.}
\Crefname{figure}{Fig.}{Figs.}

\crefname{table}{tab.}{tabs.}
\Crefname{table}{Tab.}{Tabs.}

\crefname{equation}{eq.}{eqs.}
\Crefname{equation}{Eq.}{Eqs.}

\crefname{algorithm}{alg.}{algs.}
\Crefname{algorithm}{Alg.}{Algs.}

\crefname{appendix}{app.}{apps.}
\Crefname{appendix}{App.}{Apps.}

\newtheorem{proposition}{Proposition}

\newtheorem{example}{Example}
\newenvironment{proofsketch}{\proof
}{\endproof
}

\newcommand{\f}{\mathrm{f}}

\newcommand{\g}{\mathrm{g}}

\newcommand{\pr}{\mathrm{p}}

\newcommand{\Prob}{\mathbb{P}}

\newcommand{\di}{\mathrm{d}}

\newcommand{\Xc}{\mathcal{X}}
\newcommand{\Fc}{\mathcal{F}}
\newcommand{\Yc}{\mathcal{Y}}
\newcommand{\Hc}{\mathcal{H}}

\newcommand{\Ib}{\mathbb{I}}
\newcommand{\Nb}{\mathbb{N}}

\newcommand{\Rb}{\mathbb{R}}

\newcommand{\Eb}{\mathbb{E}}
\newcommand{\Bc}{\mathcal{B}}

\newcommand{\Ic}{\mathcal{I}}
\newcommand{\Sc}{\mathcal{S}}
\newcommand{\Uc}{\mathcal{U}}
\newcommand{\Nc}{\mathcal{N}}
\newcommand{\Mc}{\mathcal{M}}

\newcommand{\Vc}{\mathcal{V}}

\newcommand{\Dc}{\mathcal{D}}
\newcommand{\Pc}{\mathcal{P}}

\newcommand{\Zc}{\mathcal{Z}}

\newcommand{\Oc}{\mathcal{O}}
\newcommand{\Qc}{\mathcal{Q}}
\newcommand{\cat}{\mathrm{Cat}}

\newcommand{\amax}{\mathrm{argmax}}
\newcommand{\unif}{\mathrm{Uniform}}

\newcommand{\EIG}{\textsc{EIG}}

\newcommand{\KL}{\textsc{KL}}
\newcommand{\MI}{\textsc{MI}}

\title{Information-Dense Synthesis for Molecular Discovery}

\author{Kasper K. Jakobsen$^*$ \& Eli N. Weinstein\footnote{Department of Chemistry, Technical University of Denmark, Kgs. Lyngby, DK. Contact: \url{enawe@dtu.dk}}}

\begin{document}

\maketitle

\begin{abstract}
Machine learning can accelerate molecular discovery by designing molecules and planning experiments.
However, many scientific challenges demand molecules with very rare properties, and in this sparse setting, existing algorithms offer little gain over random guessing.
We propose a method to efficiently search large regions of molecular space using algorithmically controlled stochastic synthesis. 
Rather than design, make and test individual molecules, we design and make complex mixtures, test them as a pool, then deconvolute the molecule-activity map. 
We optimize synthesis to encode maximal information.
Theoretically, this approach can reduce the number of experiments required to find the optimal molecule among $d$ candidates from $\mathcal{O}(d)$ to $\mathcal{O}(\log d)$ or $\mathcal{O}(1)$.
In simulation, on estimated protein fitness landscapes, it finds active molecules with an order of magnitude fewer experiments than existing Bayesian optimization methods.
\end{abstract}

\section{Introduction}

Lab-in-the-loop ML and AI systems promise to efficiently explore molecular space, by iteratively learning from data and generating new molecules to test \citep[e.g.][]{romero_exploring_2009, romero_navigating_2013, frey_lab---loop_2025, rapp_self-driving_2024, liu_amorphous_2025, mints_exploring_2022, hastrup_atomcomposer_2026}.
But very often, the region of molecular space where molecules have a desired property is exceedingly narrow:
within the combinatorial vastness of possible molecules, only a small subset catalyzes a reaction or treats a disease.
This \textit{sparseness} poses fundamental challenges to existing lab-in-the-loop systems, which
design and test candidate molecules individually. Most candidates fail, and therefore provide almost no information about the optimal molecule.
On sparse problems, even state-of-the-art algorithms perform no better than random (\Cref{thm:indiv-limit}) \citep{Bach2020-yw}.
\begin{wrapfigure}{r}{0.4\textwidth}
    \centering
    \includegraphics[width=0.4\textwidth]{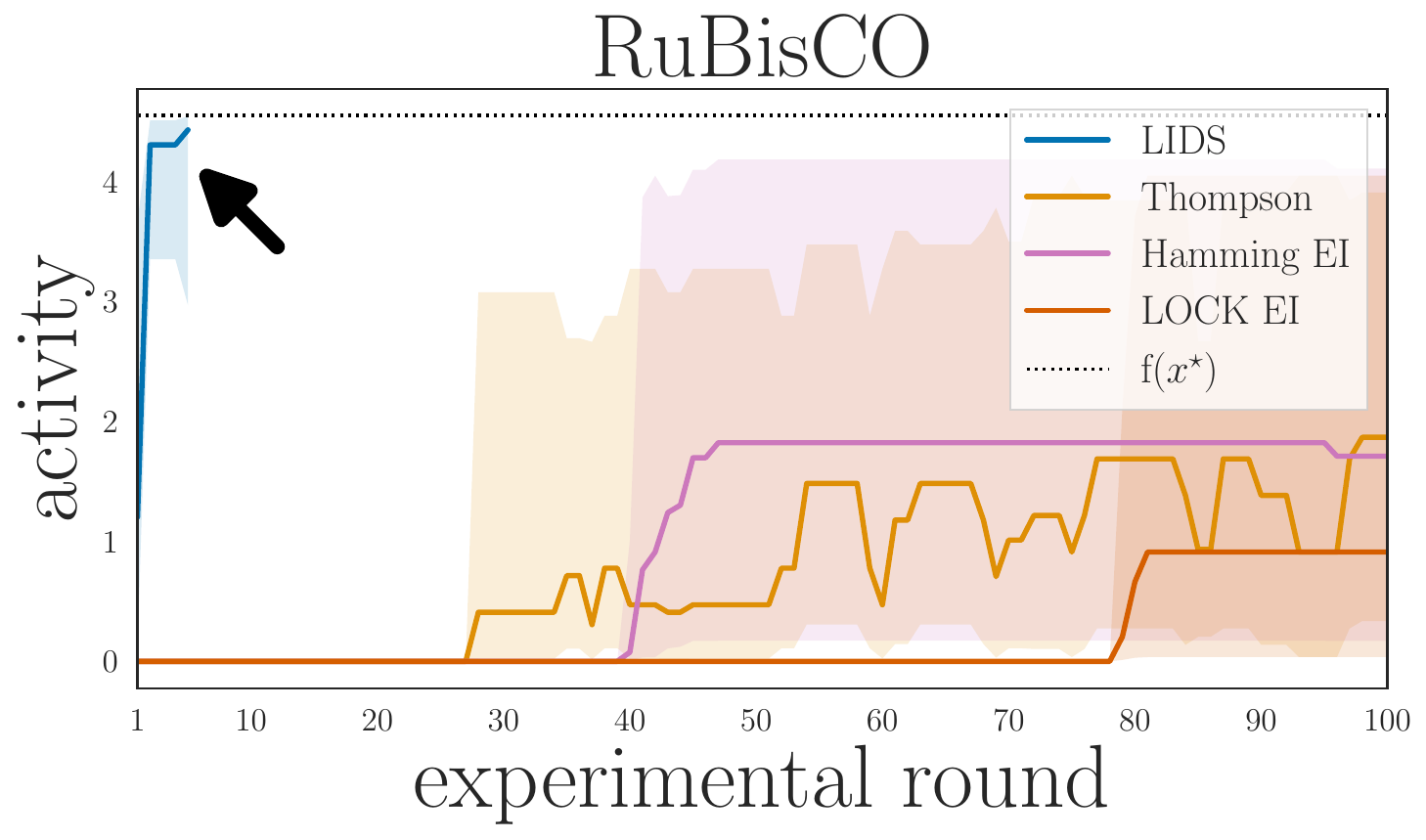}
    \caption{\textbf{Lab-in-the-loop speedup.} Optimizing an \textit{in silico} oracle with LIDS (blue) versus baselines (\Cref{sec:results}).
    }\label{fig:preview}
\end{wrapfigure}
\noindent
In this article we propose lab-in-the-loop learning with \textit{information-dense} synthesis (LIDS), a method that designs, builds and tests complex mixtures of molecules, rather than individual molecules. 
It is tractable to synthesize complex mixtures, without added laboratory time or cost, by combinatorial chemistry \citep{Szostak1997-wh,houghten_mixture-based_1999}. 
Our idea is to control these stochastic chemical reactions algorithmically, within the framework of Bayesian optimization and experimental design \citep{rainforth_modern_2024,shahriari_taking_2016}. 
We use generative models to describe stochastic synthesis \citep{weinstein_optimal_2022, weinstein_manufacturing-aware_2026}. 
We test the synthesized mixture as a pool \citep{liu_scalable_2025}. We optimize synthesis so the experiment provides maximum information about the molecule with the highest activity \citep{hernandez-lobato_predictive_2014,russo_learning_2018}. The method yields rapid acceleration on sparse molecular discovery problems (\Cref{fig:preview}).

\noindent
Our contributions are: 
\begin{enumerate}
    \item A recipe to accelerate lab-in-the-loop molecular discovery by controlled stochastic synthesis, rather than deterministic synthesis. 
    \item Efficient approximate Bayesian computation for designing stochastic syntheses, and learning from assays that test the resulting mixtures as a pool (\Cref{sec:method}). 
    \item Theoretical results showing exponential speedups over methods that design and test molecules individually (\Cref{sec:theory}). 
    \item Empirical demonstration of order-of-magnitude speedups across diverse simulated DNA and peptide activity landscapes (\Cref{sec:results}).
\end{enumerate}
Notation is defined where introduced; a table is in \Cref{app:notation}.

\section{Problem Setting}

Our goal is to discover molecules with a desired property. Let $\Xc$ denote the space of molecules we are exploring, and let $\f: \Xc \to \Rb$ denote the mapping from molecule to activity. Our goal is to find $x^\star \in \amax_x \f(x)$.
For example, $\Xc$ could be the space of peptide sequences of length $L$, i.e. $\Xc = \{A, C, \ldots\}^L$, and $\f(x)$ could be activation of a target receptor, e.g. GLP-1. 
We study lab-in-the-loop optimization methods where we query $\f(x)$ by running an experiment.

\paragraph{Current approach.} The usual technique for solving this problem is to iteratively (1) design a molecule $x_t$, (2) synthesize $x_t$ in the laboratory, (3) test $x_t$ in an assay that returns $y_t$, a noisy measurement of $\f(x_t)$, and (4) update our knowledge of $\f$ and its optimum.
Algorithms based on Bayesian optimization or reinforcement learning can be used to learn from the data $\Dc_{t-1} = \{(x_1, y_1), \ldots, (x_{t-1}, y_{t-1})\}$ and propose new molecules $x_t$ (steps 1 and 4) \citep{romero_exploring_2009,romero_navigating_2013, maus_purely_2026,groth_kermut_2024,jankowiak_flexible_2026}. Robotic systems are increasingly used to automate synthesis and testing (steps 2 and 3)  \citep{ha_ai-driven_2023}. Compiled together, steps 1-4 have become a basic recipe for automated or ``self-driving" labs in the molecular sciences \citep{rapp_self-driving_2024,abolhasani_lab_2026}. 

\paragraph{Core challenge.}   
Active molecules often occupy a narrow region of the molecular space. For example: antibody screens turn up one useful sequence among a billion candidates \citep{skora_generation_2015}; flipping a single chiral center in thalidomide makes the drug toxic \citep{vargesson_thalidomide-induced_2015}; 0.14\% of single point mutations in RuBisCO, a $\mathrm{CO}_2$ capturing enzyme, increase activity over baseline \citep{prywes_map_2025}.
The sparseness of molecule-activity maps poses a fundamental challenge to existing lab-in-the-loop discovery methods.
To illustrate, consider a simple setting in which we have $d \triangleq |\Xc|$ candidate molecules, but only one is active, and we have no prior knowledge about which one. Then there does not exist a learning algorithm that can outperform a random screen.
\begin{proposition} \label{thm:indiv-limit}
Assume $\f(x) = \Ib(x = x^\star)$ where $x^\star \sim \unif(1, \ldots, d)$, and there is no noise in the assay, so we measure $y_t = \f(x_t)$. Define expected regret, $R \triangleq \Eb_{x^\star}[\Eb_\pi[\sum_{t=1}^\infty \Ib(X_t \neq X^\star)]]$. A policy $\pi(x_t \mid \Dc_{t-1})$ that selects the next candidate $x_t$ uniformly at random from among remaining candidates has expected regret $(d-1)/2$. There does not exist a policy with lower expected regret.

\end{proposition}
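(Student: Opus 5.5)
The regret counts the number of queries made before the active molecule is first tested, so the plan is to rewrite $R$ as the expected position at which $x^\star$ appears in the sequence of queries. First, reduce to policies that never repeat a query or query a known-inactive candidate. With noiseless feedback, $y_t = \f(x_t)$, the data $\Dc_{t-1}$ either contain a $1$, in which case $x^\star$ is known and the policy can query it forever at zero further cost, or consist only of zeros on the tested set. Re-querying an already-tested inactive molecule adds $1$ to the regret and yields no information. I would argue by an exchange (pruning) step that deleting such wasted queries from any policy does not increase $R$. It therefore suffices to consider policies that, until success, query distinct candidates and, after success, query $x^\star$ forever.

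For such a policy, the run before success is determined by a (possibly randomized) ordering $\sigma$ of $\{1,\ldots,d\}$. Because every response before success is $0$, the policy's choices cannot depend on $x^\star$ until it is found. Hence $\sigma$ can be taken to be drawn independently of $x^\star$: it is the sequence of queries the policy would make if all responses were zero. The regret is then $\tau - 1$, where $\tau$ is the position of $x^\star$ in $\sigma$. Conditional on $\sigma$, the position $\tau$ is uniform on $\{1,\ldots,d\}$ since $x^\star \sim \unif(1,\ldots,d)$ and is independent of $\sigma$. Thus $\Eb[\tau - 1 \mid \sigma] = (d+1)/2 - 1 = (d-1)/2$ for every ordering, and averaging over $\sigma$ gives $R = (d-1)/2$.

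This establishes both claims at once. The uniform random policy generates one such ordering (a uniformly random permutation), so its regret is exactly $(d-1)/2$. Every nonredundant policy has the same regret, and every policy has regret at least that of its pruned version, so no policy achieves lower regret.

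The main obstacle is making the pruning step and the independence claim rigorous for arbitrary randomized, history-dependent policies. I would handle this by coupling. Represent the policy's internal randomness as a sequence $U$ independent of $x^\star$. Define the all-zero trajectory $z_1, z_2, \ldots$ as a function of $U$ alone. The actual trajectory coincides with it up to and including the first time $z_t = x^\star$. The regret then equals the number of indices $t$ before the first hit of $x^\star$ in $(z_t)$. This count is at least the number of distinct values among $z_1, z_2, \ldots$ visited before $x^\star$, with infinite regret if $x^\star$ is never visited. Conditional on $U$, and using the uniformity of $x^\star$, the expectation of that count is $(k-1)/2$ plus a penalty for the $d-k$ unvisited candidates, where $k$ is the number of distinct candidates the trajectory eventually visits. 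This quantity is at least $(d-1)/2$, with equality exactly when the trajectory visits all $d$ candidates without repetition.
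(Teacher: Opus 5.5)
Your proof is correct, but it takes a different route from the paper. The paper first assumes, without loss of generality, that the policy replays $x^\star$ once found. It then writes $R=\sum_t \Prob(x_{1:t}\neq x^\star)$ and proves by induction the per-round bound $\Prob(x_{1:t}\neq x^\star)\ge (d-t)/d$. The reason is that, given $t-1$ misses, $x^\star$ is uniform over the untested candidates, so any randomized choice at round $t$ hits with probability at most $1/(d-t+1)$. Summing over $t\le d$ gives $(d-1)/2$, with equality for the uniform policy.

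You instead condition on the policy's internal randomness $U$ and couple with the all-zero trajectory. This reduces everything to the position of a uniform element in an ordering independent of it. Your route buys three things:
\begin{itemize}
\item It avoids posterior bookkeeping for history-dependent randomized policies; the paper's conditioning tacitly takes the first $t-1$ queries to be distinct.
\item It handles repeated queries and never-visited candidates in one stroke.
\item It proves more: every policy that never re-queries a known-inactive candidate and replays $x^\star$ after success attains exactly $(d-1)/2$. This includes the deterministic sweep $1,2,\ldots,d$, so randomization is irrelevant.
\end{itemize}
The paper's route states the anytime bound $\Prob(\text{not found by round } t)\ge (d-t)/d$ directly. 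Your coupling gives it in one line too: conditional on $U$, the first $t$ queries contain at most $t$ distinct candidates.

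Two small points. First, once you have the coupling, the pruning/exchange step of your first paragraph is unnecessary, because ``regret $\ge$ number of distinct candidates queried before $x^\star$'' already holds for every policy. Second, for $k<d$ the conditional expectation is simply $+\infty$, not ``$(k-1)/2$ plus a penalty'': with probability $(d-k)/d$ the optimum is never queried and every round is a miss. So only $k=d$ gives a finite value, namely exactly $(d-1)/2$.
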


\noindent Proof in \Cref{apx:proof-indiv-limit}. In short, any algorithm--using Bayesian optimization, reinforcement learning, generative models, agents, or anything else--will have $(d-1)/2$ failed experiments on average before finding the optimum. 
The problem is that on sparse problems there is no information to extrapolate from: an experiment only tells us whether or not the tested molecule works.

\paragraph{Our approach.} To break the $(d-1)/2$ regret barrier of \Cref{thm:indiv-limit}, we propose designing, synthesizing and testing mixtures of molecules, so that each experiment carries information about many molecules at once, as depicted in \Cref{fig:lids_overview} \citep{aldridge_group_2026, russo_learning_2018,liu_scalable_2025}.

\begin{figure}[ht!]
    \centering
    \includegraphics[width=1\linewidth]{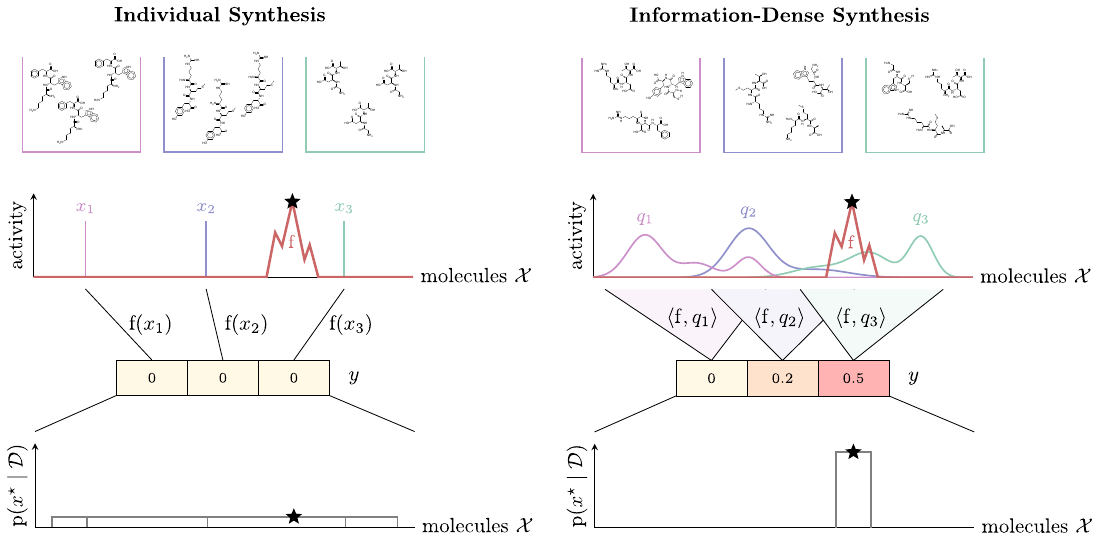}
    \caption{\textbf{Overview.} 
    Information-dense synthesis uses mixtures of molecules $q$, instead of single molecules $x$.
    By testing the mixtures as a pool, we encode the activity from many molecules into experiment outcomes $y$.
    Compared to individual synthesis, this can carry much more information about the optimal molecule $x^\star$, especially on sparse molecule-activity maps $\f$.
    } \label{fig:lids_overview}
\end{figure}

\section{Method}\label{sec:method}

We optimize $\f(x)$ by synthesizing and testing mixtures of molecules. 
Mathematically, we specify a mixture as a positive \textit{measure} $q(x)$: a distribution over different molecules, $\bar{q}(x) \in \Pc(\Xc)$, times a yield $0<c<\infty$, the total number of molecules that are made, $q(x) = c\cdot \bar{q}(x) \in \Mc^+(\Xc)$.
When we test the mixture, we assume the assay provides a noisy measurement of the average activity,
\begin{equation} \label{eqn:linearity-assumption}
	y_t = \int \f(x) q_t(x) \di x + \epsilon_t = \langle \f, q_t \rangle + \epsilon_t
\end{equation}
where $\epsilon_t$ is noise, and $\langle \f, \g \rangle \triangleq \int \f(x) \g(x) \di x$ denotes the inner product on functions.
In the case where we synthesize a pure compound, the distribution of molecules $q_t(x)$ reduces to a delta mass at a single point $x_t$, and we recover the standard problem setting, $y_t = \f(x_t) + \epsilon_t$.

\Cref{eqn:linearity-assumption} contains a linearity assumption: it says the experiment's outcome depends on the sum of the activity of each $x$, $\f(x)$, weighted by how much of $x$ is in the mixture, $q_t(x)$. This is an assumption about the \textit{assay}, not about the activity map $\f: \Xc \to \Rb$, which can be arbitrarily complex and nonlinear.
For example, in spectroscopy, the total signal depends on the sum of the absorbance or emission of each individual molecule, even if the absorbance and emission depend in a complex way on molecular structure. 
\textit{In vitro} or \textit{in vivo}, linearity violations are \textit{drug-drug interactions}, which are often lower order contributions to a cocktail's effect \citep{liu_scalable_2025}.

Our recipe for lab-in-the-loop learning consists of: \begin{enumerate}
    \item A synthesis model that describes the chemical process of creating a mixture. It specifies which $q(x) \in \Mc^+(\Xc)$ are practically achievable.
    \item An assay model that describes the noise associated with measurement, $\epsilon_t$.
    \item A design algorithm that chooses the next mixture $q_t$ to maximize information. \item An inference algorithm that updates our knowledge of $\f$ based on the experimental data.
\end{enumerate}

\subsection{Synthesis models}

We aim to design a complex mixture of molecules that is tractable to synthesize in the lab. To do so, we build on variational synthesis, which describes the output of stochastic synthesis using a generative model \citep{weinstein_optimal_2022,weinstein_manufacturing-aware_2026}. Our synthesis model consists of a family of measures $\Qc = \{q_\psi: \psi \in \Psi\} \subseteq \Mc^+(\Xc)$, where (1) $\psi$ denotes experimental parameters, i.e. specific instructions that can be sent to a synthesizer, and (2) $q_\psi(x)$ describes the mixture of molecules that come out of the synthesizer.
To design a mixture we choose a $\psi \in \Psi$.

We focus on models for solid phase synthesis, a combinatorial synthesis approach that is widely applied to make DNA, RNA, peptides, carbohydrates and more \citep{beaucage_deoxynucleoside_1981, caruthers_chemical_2013, merrifield_solid_1963, zuckermann_efficient_1992}. Consider synthesizing peptides of length $L$. 
The molecular space is $\Xc = \Bc^L$ where $\Bc$ is the alphabet of amino acids, size $B \triangleq |\Bc|$.
A solid phase peptide synthesizer makes peptides by sequentially adding amino acids. 
At each step it can add a single amino acid, or it can add a mixture of different amino acids, so that each growing peptide molecule will randomly encounter and react with a different amino acid. 
The resulting mixture can be described by a site-wise independent distribution $\cat(x \mid \beta_1) = \prod_{j=1}^L \beta_{1jx_{j}}$ where the probability of amino acid $x_j$ at position $j$ depends on its effective concentration in the $j$th reaction, $\beta_{1 j x_{j}}$, with $\beta_1 \in (\Delta_{B-1})^L$ \citep{houghten_mixture-based_1999}. 
By repeating the synthesis process $S$ times with different input concentrations, and mixing the products, we build up a distribution,
\begin{equation} \label{eqn:mix}
    \bar q_\psi(x) = \sum_{s=1}^S \alpha_s \cat(x \mid \beta_s),
\end{equation} where $\alpha \in \Delta_{S-1}$ specifies the relative concentration of peptides from each synthesis $s$, and $\beta \in (\Delta_{B-1})^{S \times L}$. This is a mixture model over discrete sequences. 
We set $\Qc= \{q_{\psi=(\alpha, \beta, c)}: c \sum_{s=1}^S \alpha_s \cat(x \mid \beta_s)\}$ as the family of synthesizable distributions, where $\psi=(\alpha,\beta, c)$ are the experimental parameters, including the overall yield $c$. 
This synthesis model is applicable to a range of heteropolymers, e.g. for DNA we can set $\Bc$ to be the nucleotides \citep{weinstein_optimal_2022,weinstein_manufacturing-aware_2026}. 

The laboratory time and resources required to synthesize the mixture distribution in \Cref{eqn:mix} is effectively the same as that required to synthesize $S$ pure sequences $x_1, \ldots, x_S$. The number of reaction steps and the amount of reagents are the same. 
The only added complexity is from mixing reagents before or during each synthesis step, a process amenable to automated liquid handling \citep{torres-acosta_automated_2022}.

\subsection{Assay models}
We assume a model of the assay's noise. We focus on Gaussian noise, $\epsilon_t \sim \mathrm{Normal}(0, \sigma)$. Though not required, our method exploits prior knowledge of the noise level, e.g. from control experiments, so we assume the standard deviation $\sigma$ is known.
Bernoulli or Poisson noise is also possible.

\subsection{Designing mixtures}

We work in the framework of sequential Bayesian experimental design \citep{rainforth_modern_2024}. We optimize synthesis to maximize the information our next experiment  will deliver about the optimum.
This acquisition function has been proposed in predictive entropy search and information directed sampling \citep{hernandez-lobato_predictive_2014, russo_learning_2018}.
First, we place a prior on the unknown molecule-activity map,
\begin{align}
    \theta &\sim \pr(\theta) \label{eqn:prior}\\
    Y_t &\sim \pr(y \mid \langle \f_\theta, q_{t} \rangle ) \label{eqn:like}
\end{align}
After performing $t-1$ experiments, our posterior updates to $\pr(\theta \mid \Dc_{t-1} = \{(y_1, q_1), \ldots, (y_{t-1}, q_{t-1})\})$.
To choose the next mixture, we maximize the expected information gain (EIG) about the optimum: $q_t = \underset{q_\psi \in \Qc}\amax \,\EIG_{x^\star}(q_\psi)=$
\begin{align} \label{eqn:info-gain}
	\underset{q_\psi \in \Qc}\amax[\Hc(\pr(x^\star_\theta\mid \Dc_{t-1})) - \Eb_{\pr(y_t \mid  q_\psi, \Dc_{t-1})}[\Hc(\pr(x^\star_\theta \mid \Dc_{t-1} \cup \{(Y_t, q_\psi)\}))]]
\end{align}
where $\Hc$ denotes the Shannon entropy and $x^\star_\theta \triangleq \amax_{x} \f_\theta(x)$. This objective quantifies the reduction in uncertainty about $x^\star$ after observing $y_t$, on average across predicted $y_t\sim \int \pr(y_t \mid q_{\psi}, \theta) \pr(\theta \mid \Dc_{t-1}) d\theta$. 
It is equivalent to the mutual information between the new datapoint, $(y_t, q_t)$, and the optimal molecule $x^\star$ \citep{hennig_entropy_2012,hernandez-lobato_predictive_2014,hernandez-lobato_predictive_2015,jones_efficient_1998,russo_learning_2018}. Assays are often run in parallel, e.g. in a 96 well plate format, in which case we can consider a batch version of the objective, with $\Dc_{t-1} \cup \{(Y^t_1, q_{\psi_1}), \ldots, (Y^t_E, q_{\psi_E})\}$.

The EIG is critical to exploiting stochastic synthesis. Thompson sampling draws $\theta \sim \pr(\theta \mid \Dc_{t-1})$ and chooses the next design as $q_{t} = \amax_{q \in \Qc} \langle \f_\theta, q \rangle\ = \delta_{x^\star_\theta}$, 
i.e. it reverts to synthesizing and testing individual molecules \citep{thompson_likelihood_1933}.
The upper confidence bound algorithm and expected improvement produce the same degeneracy (\Cref{app:acquisition_functions}) \citep{auer_finite-time_2002, jones_efficient_1998}.
As a result, these standard acquisition functions fail to provide major speedups on sparse problems, even when given access to stochastic synthesis (\Cref{sec:results}) \citep{russo_learning_2018}. \paragraph{Approximating the information gain}
The EIG is difficult to compute explicitly. Instead, we optimize the Barber-Agakov variational bound \citep{foster_unified_2020,barber_information_2003}, \begin{equation}\label{eq:batch-EIG}
	\EIG_{x^\star}(q_{\psi^t_{1:E}}) \geq \Hc(\pr(x^\star_\theta \mid \Dc_{t-1})) + \Eb_{\pr(\theta \mid \Dc_{t-1})\pr(\epsilon)}[\log r_\phi(x^\star_\theta \mid \langle \f_\theta, q_{\psi^t_{1:E}} \rangle + \sigma\epsilon_{1:E})]
\end{equation} $\triangleq \Ic_{BA}(x^\star ; q_\psi, \phi)$, where $r_\phi(x^\star \mid y)$ is the approximate posterior and $\pr(\epsilon)=\mathrm{Normal}(0,1)$ is the reparameterized noise (\Cref{app:variational_lower_bound}). We first draw samples from the posterior $\theta_{1:M} \overset{iid}{\sim} \pr(\theta \mid \Dc_{t-1})$ (\Cref{sec:inference}), then jointly optimize the synthesis $q_\psi$ and the approximate posterior $r_\phi$ by stochastic gradient ascent, using AdamW \citep{loshchilov_decoupled_2018, foster_unified_2020}. Details on architecture and training are in \Cref{app:variational_dist}.
This variational bound can be understood as an encoder-decoder model \citep{grover_uncertainty_2019}. The laboratory experiment \textit{encodes} the unknown molecule-activity map $\f_\theta$ into a noisy E-dimensional measurement $y^t_{1:E} = \langle \f_\theta, q_{\psi^t_{1:E}} \rangle + \sigma \epsilon_{1:E}$. The approximate posterior $r_\phi$ \textit{decodes} the optimal molecule $x^\star$ from this measurement.

\paragraph{A tractable sparse function class}

We need a function class and prior that can describe sparse molecule-activity maps, which can be challenging to model with Gaussian processes \citep{park_jump_2022, meng_combined_2021}.
We consider a single-layer Bayesian neural network with an exponential nonlinearity,
$\f_{\theta = (W,V)}(x) = W^\top \exp(V \cdot x)$
where $x \in \Bc^L$ is represented as a one-hot encoding, $W \in \Rb^N$, $V \in \Rb^{N\times L \times B}$, and the dot product is over the length dimension $L$ and alphabet dimension $B$.
We design a prior that encourages sparsity, so only $x$ that closely match one of the filters $V_1, \ldots, V_N$ produce nontrivial outputs, 
and assume that we have an informative prior on the maximum activity that the assay can record, $\|\f\|_\infty$, based on e.g. positive control experiments. 
This function class is expressive, and offers closed-form computation of the integral $\int \f(x) q_\psi(x) \di x$, as well as fast approximation of $x^\star$ (\Cref{app:bnn}).

\subsection{Inference from mixtures} \label{sec:inference}
We perform automated approximate Bayesian inference in a probabilistic programming language (NumPyro), adaptively switching from MCMC to stochastic variational inference as the posterior concentrates (\Cref{app:posterior_approx}).
When done gathering data, we approximately generate the maximum \textit{a posteriori} molecule, $\hat{x}^\star = \amax_x \pr(x^\star_\theta = x \mid \mathcal{D}_t)$, as our best guess of the optimal molecule $x^\star$. The full procedure is summarized in \Cref{alg:LIDS}.

\begin{algorithm}[H]
\caption{\textbf{L}earning with \textbf{I}nformation-\textbf{D}ense \textbf{S}ynthesis (LIDS)}\label{alg:LIDS}
\begin{algorithmic}
\Require batch size $E \in \Nb$, syntheses $S \in \Nb$, noise $\sigma$, posterior samples $M \in \Nb$, prior $\pi(\theta)$.

\While{conducting experiments, $t=1,2,3,\ldots$}
\State $q^t_{1:E}, \phi_t = \amax \,\Ic_{BA}(x^\star ; q_{\psi}, \phi)$ \Comment{Maximize EIG}
\State $y^t_{1:E} = \langle \f, q^t_{1:E} \rangle + \sigma\epsilon^t_{1:E}$ \Comment{Run batch of experiments}
    \State $\Dc_t \gets \Dc_{t-1} \cup \{(y^t_{i}, q^t_{i})_{i=1}^{E} \}$
    \State $\theta^t_{1:M} \overset{iid}{\sim} \pr(\theta \mid \Dc_t)$ \Comment{Update beliefs}
\EndWhile
\State \Return $\hat{x}_t^\star = \amax_x \frac{1}{M}\sum_{j=1}^M\Ib[x = x^\star_{\theta^t_j}]$

\end{algorithmic}
\end{algorithm}

\section{Related Work}

Our approach extends group testing, which searches among many candidates by conducting pooled tests, to a high-dimensional Bayesian optimization setting \citep{aldridge_group_2026,Garnett2023-uf}.
Group testing methods have a long history in molecular discovery and combinatorial chemistry \citep{Kainkaryam2009-bb}.
They are closely related to compressed sensing, which exploits structure such as sparsity to learn from limited measurements \citep{Atia2009-gq}.
\citet{liu_scalable_2025} propose \textit{compressed screening} to learn the activity of a set of compounds synthesized individually, using pooled tests \citep{Yao2023-hd,Cleary2020-zm}.
\citet{Weinstein2025-fv} generalize from learning vectors to training sequence activity models; they synthesize DNA stochastically by variational synthesis, then deliver multiple DNA sequences into the same cell to make randomized mixtures.
We extend these ideas, learning sequence-activity models through iterative experimentation with model-designed mixtures.

A critical advantage of pooled testing is that it does not require barcoding individual molecules \citep{Peterson2023-ob}.
LIDS is therefore not restricted to DNA and DNA-encoded libraries, and can instead be applied to other classes of molecules \citep{Szostak1997-wh}.

LIDS builds on previous methods for optimization based on the EIG, such as entropy search \citep{mackay_information-based_1992,villemonteix_informational_2009,hennig_entropy_2012,hernandez-lobato_predictive_2014}.
\citet{russo_learning_2018} show the EIG enables rapid optimization of sparse functions, where UCB and TS fail.
To operationalize this idea, we apply recent advances in EIG approximation \citep{rainforth_modern_2024, foster_unified_2020}.
Like the uncertainty autoencoder (UAE) \citep{grover_uncertainty_2019}, we design experiments end-to-end by training an encoder and decoder, but our focus is on learning sparse functions $\f(x)$ rather than sparse Euclidean vectors.
Like \citet{sharrock_wasserstein_2026} we optimize distributions $q(x)$, but our goal is to design mixtures not ease computation.

Molecular discovery relies on causal assumptions: a molecule-activity map $\f(x)$ specifies the effect of an intervention $x$ on an outcome $y$. 
We disaggregate the conventional causal model, $y = \f(x)+\epsilon$, and replace it with a hierarchical causal model, where many molecules (subunits) affect an outcome (\Cref{fig:hcm}) \citep{Weinstein2026-jl,Weinstein2024-oq}.
The corresponding estimator, \Cref{eqn:prior,eqn:like}, is Bayesian distributional regression \citep{zaheer_deep_2018,Szabo2016-fd}.

\section{Theory}\label{sec:theory}
We show information-dense synthesis can rapidly search large molecular spaces (\Cref{sec:spec_cases}), but is slowed by experimental limitations (\Cref{sec:lim_n_trad})

\subsection{Exponential speed ups}\label{sec:spec_cases}
In principle, information-dense synthesis can find the optimal molecule with a single experiment.

\begin{example}[No noise, discrete map]\label{ex:O1-search}
    Assume activity is discrete, s.t. $\f: \Xc \to \Yc$ for $\Yc=\{0, \ldots |\mathcal{Y}|-1\}$. 
    Assume the model is well specified, $\f \in \{\f_\theta : \theta \in \Theta\}$, and $\pi(\f_\theta = \f) > 0$, 
    we can synthesize any $q \in \Qc = \Pc(\Xc)$, and our assay has no noise, $y_t = \langle \f, q\rangle$. 
    Then, by optimizing the EIG (\Cref{eqn:info-gain}), we find $x^\star$ after one experiment.
\end{example}
\begin{proofsketch}
Set $q(x) = \frac{1}{Z}\frac{1}{|\Yc|^x}$ where $Z = \sum_{x=1}^d \frac{1}{|\Yc|^x}$.
Now $Z y = Z \langle \f, q \rangle = \sum_{x=1}^d \frac{\f(x)}{|\Yc|^x}$, so $\f(x)$ is encoded in the digits of $Z y$, e.g. for $|\Yc| = 10$ we have $Z y = 0.\f(1)\f(2)\ldots\f(d)$. 
\end{proofsketch}
\noindent Proof in \Cref{app:O1-proof}. This example assumes discrete activity, e.g. $y$ could be the number of fluorophores that attach to $x$. However, $\mathcal{O}(1)$ optimization can also be achieved for $\Yc = \Rb$, when $\f_\theta$ is sparse (\Cref{app:1-sparse_example}).
Next, consider assays that return just one bit, instead of a continuous value. \begin{example}[Binary activity test]\label{ex:2E-ary_search}
    Assume $\f$ is 1-sparse, $\f(x) \in \{w \Ib(x = \theta): \theta\in\Xc,\ w\in\Rb_+\}$, we can synthesize any $q \in \Qc = \Pc(\Xc)$, and our model is well-specified, with uniform prior $\pi(\theta) = 1/d$. The assay reports where any molecule is active, $y_t = \Ib[\langle \f, q\rangle > 0]$.
Then, by optimizing the EIG with batch size $E$ (\Cref{eq:batch-EIG}), we find $x^\star$ after $\lceil \log_2 d/E \rceil$ batched experiments. \end{example}
\begin{proofsketch}
The EIG executes $2^E$-ary search. Each $q$ contains half the current possible $x^\star$ values. Observing $y$ tells us whether $x^\star$ is in that half or the other, cutting the search space in half.
\end{proofsketch}
\noindent Proof in \Cref{app:2E-ary_search-proof}, with an additional example in \Cref{app:proof-info-gain-thresh}. The space of all small molecules is estimated at $d = {\sim} 10^{60}$ \citep{ruddigkeit_enumeration_2012}. This suggests it could be searched with three 96-well plates, where testing compounds individually would require ${\sim}10^{60}$ experiments (\Cref{thm:indiv-limit}). We next discuss how reality intervenes.

\subsection{Limitations and tradeoffs}\label{sec:lim_n_trad}
Our analytic examples assumed no noise, full control over the molecular mixture, and restrictions on the molecule-activity map. In this section we examine the method's behavior as these assumptions are relaxed or changed. We first examine the information gained in a single experiment.

\begin{proposition}[Information bound]\label{prop:gaussian_mi_bound}
Assume Gaussian noise, $y = \langle \f_\theta, q \rangle + \sigma \epsilon$, where $\epsilon \sim \mathrm{Normal}(0,1)$, and $\langle \f_\theta, \bar q \rangle \in \Zc$ for $\Zc$ a finite set. 
Then,
\begin{align}
        \EIG_\theta(q) \leq \min\left\{ \Hc(\langle \f_\theta, \bar{q}\rangle), \, \frac{1}{2}\log \left(1 + \frac{c^2}{\sigma^2} \mathrm{Var}_{\pi} [\langle \f_\theta, \bar{q} \rangle] \right) \right\}. \end{align}
\end{proposition}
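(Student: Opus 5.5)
We write $Z \triangleq \langle \f_\theta, \bar q\rangle$, so that $y = cZ + \sigma\epsilon$ with $\epsilon$ independent of $\theta$. The structure to exploit is the Markov chain $\theta \to Z \to y$. The observation $y$ depends on $\theta$ only through the scalar $Z$, because $q = c\bar q$ is fixed by the design. Since $\EIG_\theta(q)$ is the mutual information $\MI(\theta; y)$, the data processing inequality gives
\begin{equation*}
\MI(\theta; y) \le \MI(Z; y).
\end{equation*}
In fact equality holds here, because $\pr(y\mid\theta)=\pr(y\mid Z)$. Both terms in the minimum will then come from bounding $\MI(Z;y)$ in two different ways.

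\textbf{First bound.} Because $\Zc$ is finite, $Z$ is a discrete random variable under the prior $\pi$. Hence $\MI(Z;y) = \Hc(Z) - \Hc(Z\mid y) \le \Hc(Z)$, using nonnegativity of the discrete conditional entropy. This is where finiteness of $\Zc$ is needed: it makes $\Hc(\langle \f_\theta,\bar q\rangle)$ a well-defined Shannon entropy with $\Hc(Z\mid y)\ge 0$.

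\textbf{Second bound.} This is the Gaussian channel capacity argument. We write
\begin{equation*}
\MI(Z;y) = h(y) - h(y\mid Z) = h(y) - \tfrac12\log(2\pi e\sigma^2),
\end{equation*}
since conditional on $Z$ the observation $y$ is Gaussian with variance $\sigma^2$. Independence of $\epsilon$ and $Z$ gives $\mathrm{Var}(y) = c^2\mathrm{Var}_\pi(Z) + \sigma^2$. Among all distributions with a given variance the Gaussian maximizes differential entropy, so
\begin{equation*}
h(y) \le \tfrac12\log\bigl(2\pi e(c^2\mathrm{Var}_\pi(Z)+\sigma^2)\bigr).
\end{equation*}
Subtracting the two expressions yields $\tfrac12\log\bigl(1+\tfrac{c^2}{\sigma^2}\mathrm{Var}_\pi[Z]\bigr)$. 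The variance is finite because $\Zc$ is finite, so $y$ has a density with finite second moment and the maximum-entropy bound applies.

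Taking the minimum of the two bounds completes the argument. The only delicate step is justifying the reduction from $\theta$ to $Z$ when $\theta$ may be continuous and $\EIG_\theta$ is defined as an entropy difference of a posterior. We handle this by working directly with the mutual-information form $\MI(\theta;y)=\Eb[\log \pr(y\mid\theta)/\pr(y)]$. In that form the identity $\pr(y\mid\theta)=\mathrm{Normal}(y; cZ,\sigma^2)$ shows immediately that $\MI(\theta;y)=\MI(Z;y)$, with no entropy of $\theta$ needed. The remaining steps are standard information-theoretic facts.
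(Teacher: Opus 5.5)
Your proposal is correct and follows the paper's proof essentially step for step. You reduce $\EIG_\theta(q)=\MI(\theta;y)$ to the mutual information with the scalar signal; the paper does this via the chain rule with $\MI(\theta;Y\mid S)=0$, and you do it via the Markov chain $\theta\to Z\to y$. You then bound that quantity by the discrete entropy and by the Gaussian-channel capacity, exactly as the paper does. The only cosmetic difference is that you work with $Z=\langle \f_\theta,\bar q\rangle$ rather than the paper's $S=\langle \f_\theta,q\rangle=cZ$, which makes the factor $c^2$ explicit and avoids the paper's tacit identity $\Hc(S)=\Hc(Z)$.
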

\noindent The proof (\Cref{app:proof-EIGbound}) applies the Shannon-Hartley theorem.
This bound shows information is limited by an assay's \textit{signal-to-noise ratio} $\textsc{SNR}(q) \triangleq \frac{c^2}{\sigma^2} \mathrm{Var}_{\pi} [\langle \f_\theta, \bar{q} \rangle]$.
High noise $\sigma$ can reduce the EIG to zero. 
High synthesis yield $c$ can increase the EIG: if we have more material to test, the SNR will increase.
Control over synthesis is also important: we must achieve a high inner product between what we can make, $q$, and the molecule-activity map we want to learn, $\f_\theta$.

This bound also reveals a tradeoff that comes from using stochastic synthesis rather than individual synthesis. We can often increase the signal's entropy $\Hc(\langle \f_\theta, \bar{q}\rangle)$ by testing more spread out mixtures $\bar q(x)$. However, the SNR is maximized by concentrating on a single molecule, since $\exists\delta_{x} \in\amax_{\bar q}\mathrm{Var}_{\pi} [\langle \f_\theta, \bar{q} \rangle]$, following Bauer's maximum principle (\Cref{app:proof-EIGbound2}).\hspace*{\fill}
\begin{wrapfigure}{r}{0.5\textwidth}
\centering
    \includegraphics[width=\linewidth]{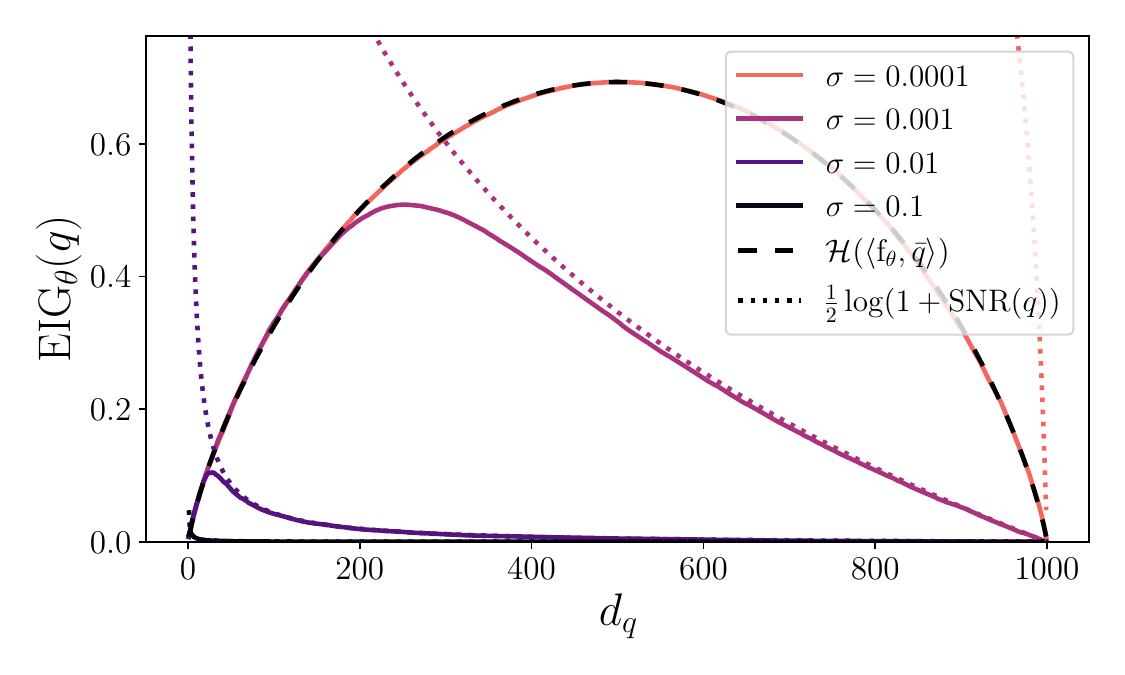}
    \caption{\textbf{EIG tradeoffs.} EIG from \Cref{ex:gaussian_example}, plotted alongside MI upper bounds from \Cref{prop:gaussian_mi_bound}, as a function of $d_q = |\mathrm{supp}(q)|$.}
    \label{fig:MI_vs_dq}
    \vspace{-2em}
\end{wrapfigure}
\begin{example}\label{ex:gaussian_example}
    Assume the molecule activity map is 1-sparse with maximum value one, $\f_\theta(x) \in \{ \Ib(x = \theta):\theta\in\Xc \}$, and the prior is uniform, $\pi(\theta) = 1/d$.
    Assume a synthesis model that creates uniform mixtures, $q(x) = \frac{1}{d_q} \Ib(x \in U)$ for $U \subseteq \Xc$, $\|q\|_1=c=1$ and $d_q \triangleq |U|$. 
    Assume Gaussian noise $y = \langle \f_\theta, q\rangle + \sigma \epsilon$ where $\epsilon \sim \mathrm{Normal}(0, 1)$.
    In this example, we can numerically solve the EIG, and compare with the bounds from \Cref{prop:gaussian_mi_bound}, see \Cref{fig:MI_vs_dq}.
    As the noise $\sigma$ increases, the optimal tradeoff between entropy and SNR shifts to smaller values of $d_q$.
    See \Cref{app:proof-gaussian_example} for details.
\end{example}
Synthesizing mixtures can offer major gains in early experiments, but the advantage over individual synthesis decreases to zero in the limit as the number of experiments goes to infinity.
\begin{proposition}\label{prop:paninski}
    Let $\f_\theta : \Xc \to \Rb_+$ be a model of the molecule-activity map, where $\{\f_\theta =(\f_\theta(x))_{x\in\Xc}: \theta \in \Theta\} = \Rb^{|\Xc|}$. Assume the prior $\pi(\f_\theta)$ is positive and absolutely continuous in a neighborhood of the true $\f$. Consider maximizing the EIG for $\f_\theta$. Let $\Mc^+_c(\Xc) = \{q = c \bar q: \bar q \in \Pc(\Xc)\}$ be the set of mixtures and let $\Sc_c(\Xc) = \{q = c\delta_x: x \in \Xc\}$ be the set of delta mixtures (individual synthesis).
    Define $\sigma_{info}^2(\Qc) \triangleq \underset{t\to\infty}{\lim}t\cdot\mathrm{Cov}[\pr(\f_\theta \mid \Dc_t)]$ to be the asymptotic covariance of the posterior when we choose $q \in \Qc$ at each step $t$ by maximizing the EIG.
    Then, the asymptotic covariance is equal regardless of whether we test mixtures or individual molecules: 
$\sigma^2_{info}(\Sc_c(\Xc)) = \sigma^2_{info}(\Mc_c^+(\Xc))$.
\end{proposition}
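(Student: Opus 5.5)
The plan is to reduce both cases to Paninski's asymptotic theory of information-maximizing experimental design. That theory says that under regularity conditions, greedy EIG maximization yields a posterior that is asymptotically normal. Its limiting scaled covariance is the inverse of a Fisher information $\mathcal{I}(\f) = \int F(\f; q)\, \di\mu^\star(q)$, where $F(\f;q)$ is the per-experiment Fisher information and $\mu^\star$ is a design measure on $\Qc$ maximizing $\log\det \int F(\f;q)\,\di\mu(q)$. The assumptions in the statement are exactly what this needs: $\f_\theta$ ranges over all of $\Rb^{|\Xc|}$, and the prior is positive and absolutely continuous near $\f$. Hence $\sigma^2_{info}(\Qc) = \bigl(\max_{\mu \in \Pc(\Qc)} \det\text{-optimal } \int F\,\di\mu\bigr)^{-1}$, and it suffices to show that the D-optimal information matrices over $\Sc_c(\Xc)$ and over $\Mc_c^+(\Xc)$ coincide.

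First I compute $F$. With Gaussian noise, $y = \langle \f, q\rangle + \sigma\epsilon$ is linear in $\f$, so $F(\f;q) = \sigma^{-2} q q^\top$, independent of $\f$. Write $q = c\bar q$, viewed as a vector in $\Rb^{d}$. The problem becomes classical D-optimal design for linear regression with design region $\Qc$: maximize $\log\det M(\mu)$, where $M(\mu) = \frac{c^2}{\sigma^2}\int \bar q \bar q^\top \di\mu(\bar q)$.

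Next I show the optimum over the simplex equals the optimum over its vertices. Since $\Sc_c \subset \Mc^+_c$, the mixture optimum is at least as good as the delta optimum. For the reverse inequality I use the Kiefer--Wolfowitz equivalence theorem. The design $\mu_0$ that is uniform on the vertices $\{\delta_x\}$ gives $M(\mu_0) = \frac{c^2}{d\sigma^2} I$. A design is D-optimal over $\Qc$ iff $\sup_{q\in\Qc} q^\top M(\mu_0)^{-1} q \le d$ (in the $\bar q$ scaling). Here $\bar q^\top (d I)\bar q = d\|\bar q\|_2^2 \le d\|\bar q\|_1^2 = d$ for every $\bar q \in \Pc(\Xc)$. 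This is the same Bauer-type convexity fact used earlier for the SNR: the maximum of a convex function over the simplex is attained at a vertex. So $\mu_0$, which is supported on $\Sc_c$, is already D-optimal over $\Mc_c^+$. The two optimal information matrices are equal, and therefore so are the asymptotic covariances $\sigma^2_{info}$. If D-optimal designs are not unique, the optimal $M$ still is, by strict concavity of $\log\det$.

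The main obstacle is justifying the invocation of Paninski's result, namely that sequential greedy EIG maximization over each design class attains the D-optimal information rate. This requires compactness of the design set and identifiability, i.e. nonsingularity of $M$ achievable within $\Qc$. Both hold: $\Sc_c$ is finite with information spanning $\Rb^d$, and $\Mc^+_c$ is compact. I would check those conditions explicitly and otherwise cite the theorem. A secondary subtlety is that the positivity constraint $\f_\theta\ge 0$ conflicts with the full-$\Rb^{|\Xc|}$ parameterization. I would handle this by assuming the true $\f$ lies in the interior, where the local asymptotics apply unchanged.
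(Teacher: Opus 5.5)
Your proposal is correct, and its skeleton matches the paper's. Both invoke Paninski's asymptotic normality for infomax design, compute the Gaussian Fisher information $\sigma^{-2} q q^\top$, and show that the uniform design over the deltas, with information matrix $\frac{c^2}{d\sigma^2} I_d$, is already D-optimal over all of $\Mc_c^+(\Xc)$.

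Where you differ is the optimality certificate. The paper bounds the determinant globally over the convex hull. Every matrix there has nonnegative entries summing to $c^2/\sigma^2$, so its trace is at most $c^2/\sigma^2$. AM--GM on the eigenvalues then gives $\det C \le (c^2/(d\sigma^2))^d$, attained by $\frac{c^2}{d\sigma^2} I_d$. You instead verify the Kiefer--Wolfowitz condition pointwise, $d\|\bar q\|_2^2 \le d$ on the simplex. The two arguments rest on the same inequality $\|\bar q\|_2 \le \|\bar q\|_1$: the paper's step ``trace $\le$ sum of entries'' is exactly this inequality averaged over the design.

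The paper's route is fully elementary and needs no design-theory citation. Yours leans on a heavier theorem, but it pays off. The support condition of the equivalence theorem, combined with the uniqueness of the optimal matrix that you note, gives a stronger conclusion: \emph{every} D-optimal design over mixtures is supported on deltas, since $d\|\bar q\|_2^2 = d$ only at vertices.

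Your remarks on uniqueness of the optimal $M$ and on the tension between $\f_\theta \ge 0$ and the full-$\Rb^{|\Xc|}$ parameterization also tidy up points the paper leaves implicit. The paper's $(\amax_C \det C)^{-1}$ tacitly assumes that uniqueness.
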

\noindent The proof is in \Cref{app:proof-paninski}; it uses the Bernstein-von Mises theorem for sequential Bayesian design from \citet{paninski_asymptotic_2005}, which implies the posterior over $\f_\theta$ is asymptotically normal.
A limitation of this result is it applies to the EIG for learning the entire function $\f_\theta$, while LIDS focuses just on the optimum $x^\star$; we conjecture similar behavior for learning the optimum, since it is a deterministic function of $\f_\theta$.
The overall picture is that at the beginning of learning, synthesizing mixtures can offer major advantages, but in the limit as we collect infinite data and approach complete knowledge of $\f$, the advantages diminish.

\section{Empirical Results}\label{sec:results}

We compare LIDS to individual synthesis empirically. We evaluate in simulations, where the ground truth molecule-activity map $\f$ is specified by different oracles, and measurements have Gaussian noise with known variance.
We compare to molecular optimization methods that use individual synthesis.
First, Thompson sampling using the same prior $\pi(\theta)$ and model $\f_\theta$ as LIDS (\textit{Thompson}) \citep{thompson_likelihood_1933}. 
Then, Gaussian process (GP) Bayesian optimization with expected improvement (EI), using either the exponential Hamming kernel (\textit{Hamming EI}) or the LOCK kernel (\textit{LOCK EI}), which is SOTA for learning protein activity from limited data \citep{jankowiak_flexible_2026,amin_biological_2023,groth_kermut_2024}.
Finally, we ablate LIDS by removing stochasticity from its synthesis design $q_t$, replacing it with $\bar q_t(x) \gets \amax_{\delta_x} \langle \delta_x, \bar q_t \rangle$ (\textit{LIDS-argmax}). 
To compare LIDS fairly to individual synthesis, we set the number of syntheses equal to the batch size $S=E$ and only vary the weights within a batch $\psi^t_{1:E} = (\alpha^t_1, \beta, c),...,(\alpha^t_{E}, \beta, c)$, so the synthesis cost (amount of reagents, number of reaction steps) is the same (Note when experiments are bottlenecked by assay cost rather than synthesis cost, LIDS can use $S \gg E$ to reach higher EIG).
To monitor performance, after every round $t$ we evaluate the true activity of the predicted optimum, $\f(\hat{x}_t^\star)$. 
All experiments use a single Nvidia A100 GPU. LIDS is by far the most computationally intensive: EIG maximization takes $\approx30$min and posterior updates take $\approx30$min for the oracles in \Cref{sec:res_plm}. Key results:
\begin{itemize}
    \item On synthetic $\f$ with rare activity, LIDS finds the optimum in 2-5 experiments (median) where individual synthesis methods fail even after 100.
    \item On learned $\f$ from a protein language model, LIDS optimizes an order of magnitude faster.
    \item LIDS is robust to model misspecification and to noise in the assay or synthesis, with performance degrading gradually towards individual synthesis as noise increases.
\end{itemize}

\subsection{Sparse synthetic oracle}\label{sec:res_sparse}

We first evaluate LIDS on a small-scale problem where we specify a sparse $\f$ as an oracle.

\paragraph{Setup} Let $\Xc = \{A,T,C,G\}^8$ be the set of all $d = |\Xc| = 65{,}536$ DNA sequences of length $L=8$. We construct an oracle $\f$ where the optimal sequence $x^\star$ has activity one and all sequences more than one mutation away from $x^\star$ have activity zero. Measurement adds Gaussian noise with standard deviation $\sigma \in \{10^{-2}, 10^{-3},10^{-4}\}$. We fix the yield $c=1$. 
This corresponds to \textit{peak-signal-to-noise ratio} $\textsc{PSNR} \triangleq 10\log_{10}(\frac{c^2}{\sigma^2}) \in \{40,60,80\}$ dB.
At every round, we test a batch of $E=12$ designs.
We run 10 independent repeats for each method.
Details in \Cref{app:fsp}.

\begin{figure}[ht!]
    \centering
    \includegraphics[width=1\linewidth]{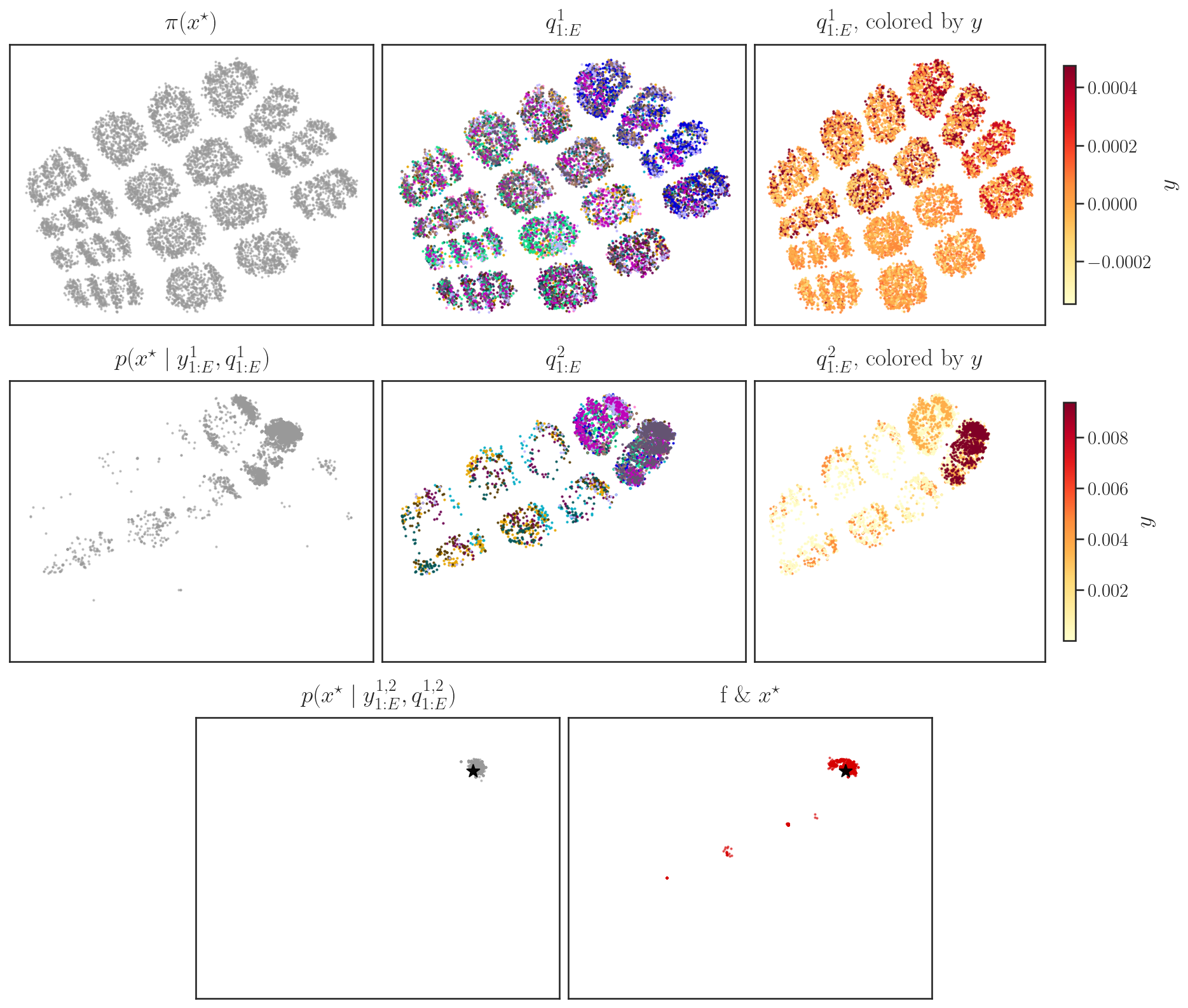}
    \caption{\textbf{Two rounds of LIDS on a sparse synthetic oracle (\Cref{sec:res_sparse}).} 
    UMAP projection of DNA sequences. 
    Row 1: samples from the prior $\pi(x^\star)$ (left), the mixtures designed at $t=1$ by LIDS $q^1_{1:E}$ (middle), and the same mixtures, colored by their measured activity $y^1_{1:E}$ (right). 
    Row 2: the posterior after one experiment round, $\pr(x^\star \mid \Dc_{1}=(q^1_{1:E},y^1_{1:E}))$ (left), the mixtures designed at $t=2$ by LIDS $q^2_{1:E}$ (middle), and the same mixtures, colored by their activity $y^2_{1:E}$ (right).
    Row 3: the posterior after two experiment rounds, $\pr(x^\star \mid \Dc_{2}=(q^{1,2}_{1:E},y^{1,2}_{1:E}))$ (left), and the true activity (right), normalized to be a distribution for visualization $\f(x) / \sum_{x\in\Xc}\f(x)$.
    After these two rounds, LIDS has converged on the global optimum (star).
    } \label{fig:lids_viz}
\end{figure}

\paragraph{Results} At low noise-levels ($\textsc{PSNR} = 80$ dB), LIDS identifies the optimum $x^\star$ in less than 8 rounds of experiments across all 10 repeats, with a median of just 2 rounds (\Cref{tab:rounds_sparse_dna,fig:DNA_low}) In most repeats, the individual synthesis methods (Thompson sampling and Hamming EI) failed to find the optimum within 100 rounds of experiments.
As the assay noise increases, the advantage of LIDS diminishes gradually, approaching the individual synthesis methods (\Cref{fig:DNA_medium,fig:DNA_high}). 

We visualized LIDS qualitatively by a low-dimensional projection of its designs $q_t$ and beliefs $\pr(x^\star \mid \mathcal{D}_t)$ (\Cref{fig:lids_viz}).
At each round $t$, LIDS designs complex stochastic syntheses that cover its remaining uncertainty.

\begin{table}
\centering
\begin{minipage}[t]{0.48\linewidth}
\centering
\small \captionof{table}{\textbf{Sparse oracle.} Number of rounds until finding the optimal molecule, $\hat{x}^\star = x^\star$.}
\label{tab:rounds_sparse_dna}
\begin{tabular}{@{}clccc@{}}
\toprule
\multirow{2}{*}{Noise level} & \multicolumn{1}{c}{\multirow{2}{*}{Method}} & \multicolumn{3}{c}{\# of rounds $\downarrow$} \\ \cmidrule(l){3-5} 
                             & \multicolumn{1}{c}{}                        & min      & median     & max     \\ \midrule
\multirow{3}{*}{Low}         
& Thompson                                 & 14                 & $>$100            & $>$100            \\
& Hamming EI                               & 42                 & $>$100            & $>$100            \\
& LIDS (ours)                              & \textbf{2}         & \textbf{2}        & \textbf{8}        \\
\midrule
\multirow{3}{*}{Medium}      
& Thompson                                 & 19                 & $>$100            & $>$100            \\
& Hamming EI                               & 45                 & $>$100            & $>$100            \\
& LIDS (ours)                              & \textbf{3}         & \textbf{4.5}      & \textbf{14}       \\
\midrule
\multirow{3}{*}{High}        
& Thompson                                 & 38                 & $>$100            & $>$100            \\
& Hamming EI                               & 41                 & $>$100            & $>$100            \\
& LIDS (ours)                              & \textbf{21}        & $>$100            & $>$100            \\
\bottomrule
\end{tabular}
\end{minipage}\hfill
\begin{minipage}[t]{0.48\linewidth}
\centering
\small
\captionof{table}{\textbf{PLM oracle.} Number of rounds until finding the optimal molecule, $\hat{x}^\star = x^\star$.}
\label{tab:rounds_sparse_aa}
\begin{tabular}{@{}clccc@{}}
\toprule
\multirow{2}{*}{Protein} & \multicolumn{1}{c}{\multirow{2}{*}{Method}} & \multicolumn{3}{c}{\# of rounds $\downarrow$} \\ \cmidrule(l){3-5} 
                             & \multicolumn{1}{c}{}                        & min       & median    & max      \\ \midrule
\multirow{5}{*}{PETase}      
& Thompson                                  & $>$100            & $>$100            & $>$100            \\
& Hamming EI                                & 9                 & 27                & 88                \\
& LOCK EI                                   & 12                & 27                & 47       \\
& LIDS-argmax                               & $>$5              & $>$5              & $>$5              \\ 
& LIDS (ours)                               & \textbf{2}        & \textbf{3}        & $>$5              \\
\midrule
\multirow{5}{*}{RuBisCO}     
& Thompson                                  & 72                & $>$100            & $>$100            \\ 
& Hamming EI                                & 42                & 118               & $>$200            \\ 
& LOCK EI                                   & 81                & $>$100            & $>$100            \\ 
& LIDS-argmax                               & $>$5              & $>$5              & $>$5              \\ 
& LIDS (ours)                               & \textbf{2}        & \textbf{2}        & $>$5              \\
\midrule
\multirow{5}{*}{SazCA}     
& Thompson                                  & $>$100            & $>$100            & $>$100            \\ 
& Hamming EI                                & $>$100            & $>$100            & $>$100            \\ 
& LOCK EI                                   & $>$100            & $>$100            & $>$100            \\ 
& LIDS-argmax                               & $>$5              & $>$5              & $>$5              \\ 
& LIDS (ours)                               & $>$5              & $>$5              & $>$5              \\
\bottomrule
\end{tabular}
\end{minipage}
\end{table}

\begin{figure}
  \begin{subfigure}{0.33\textwidth}
    \includegraphics[width=\linewidth]{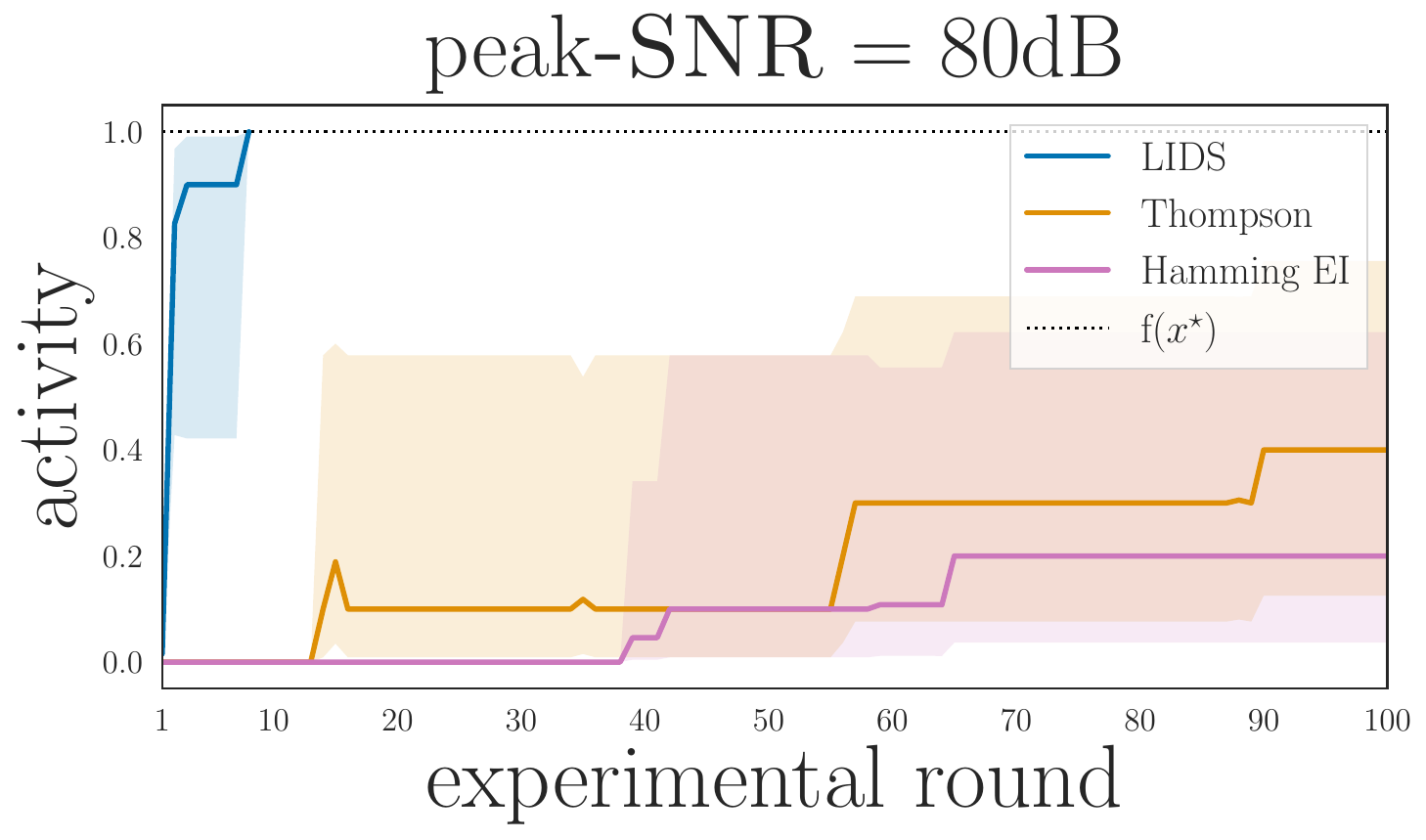}
    \caption{}
    \label{fig:DNA_low}
  \end{subfigure}\hfill
  \begin{subfigure}{0.33\textwidth}
    \includegraphics[width=\linewidth]{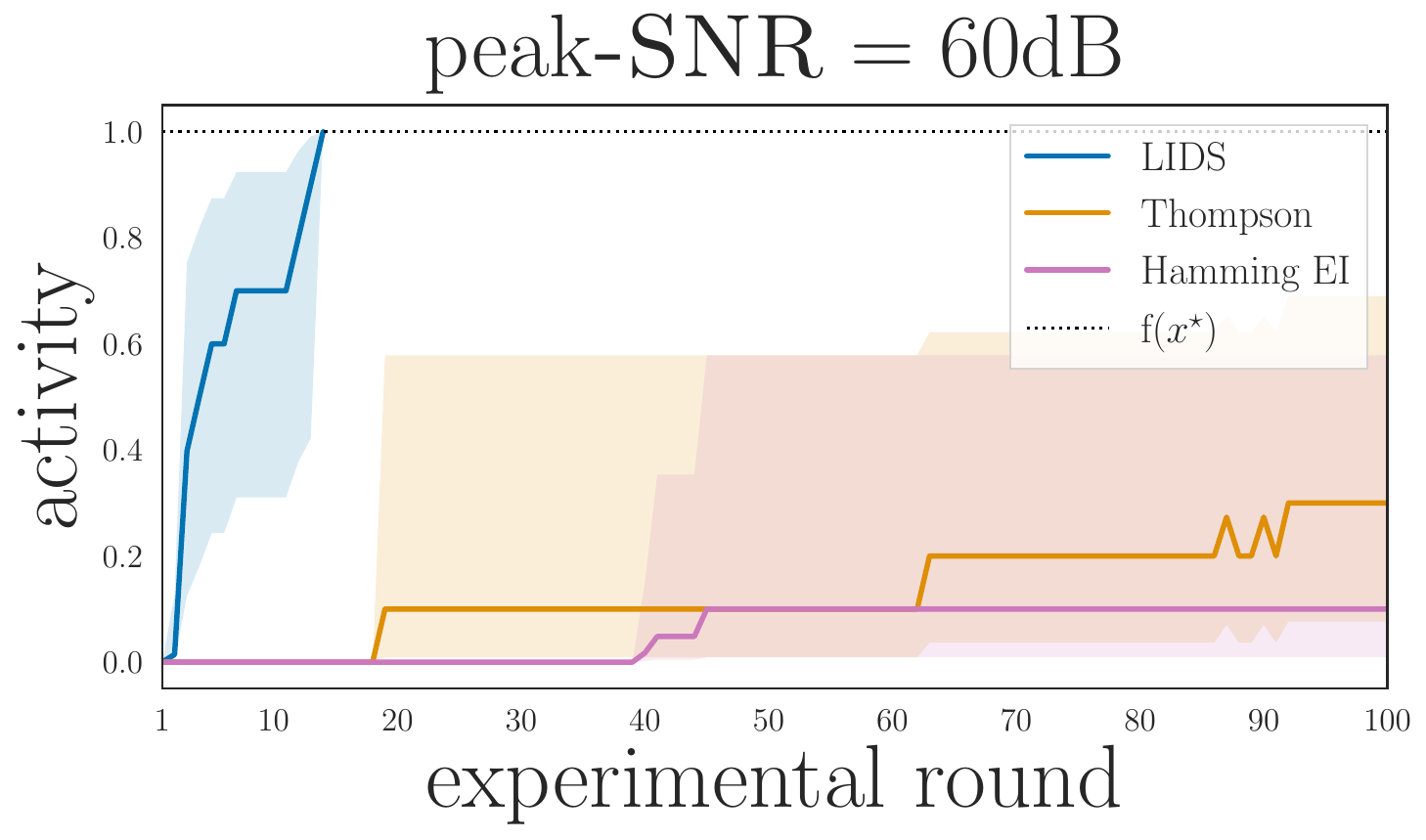}
    \caption{}
    \label{fig:DNA_medium}
  \end{subfigure}\hfill
  \begin{subfigure}{0.33\textwidth}
    \includegraphics[width=\linewidth]{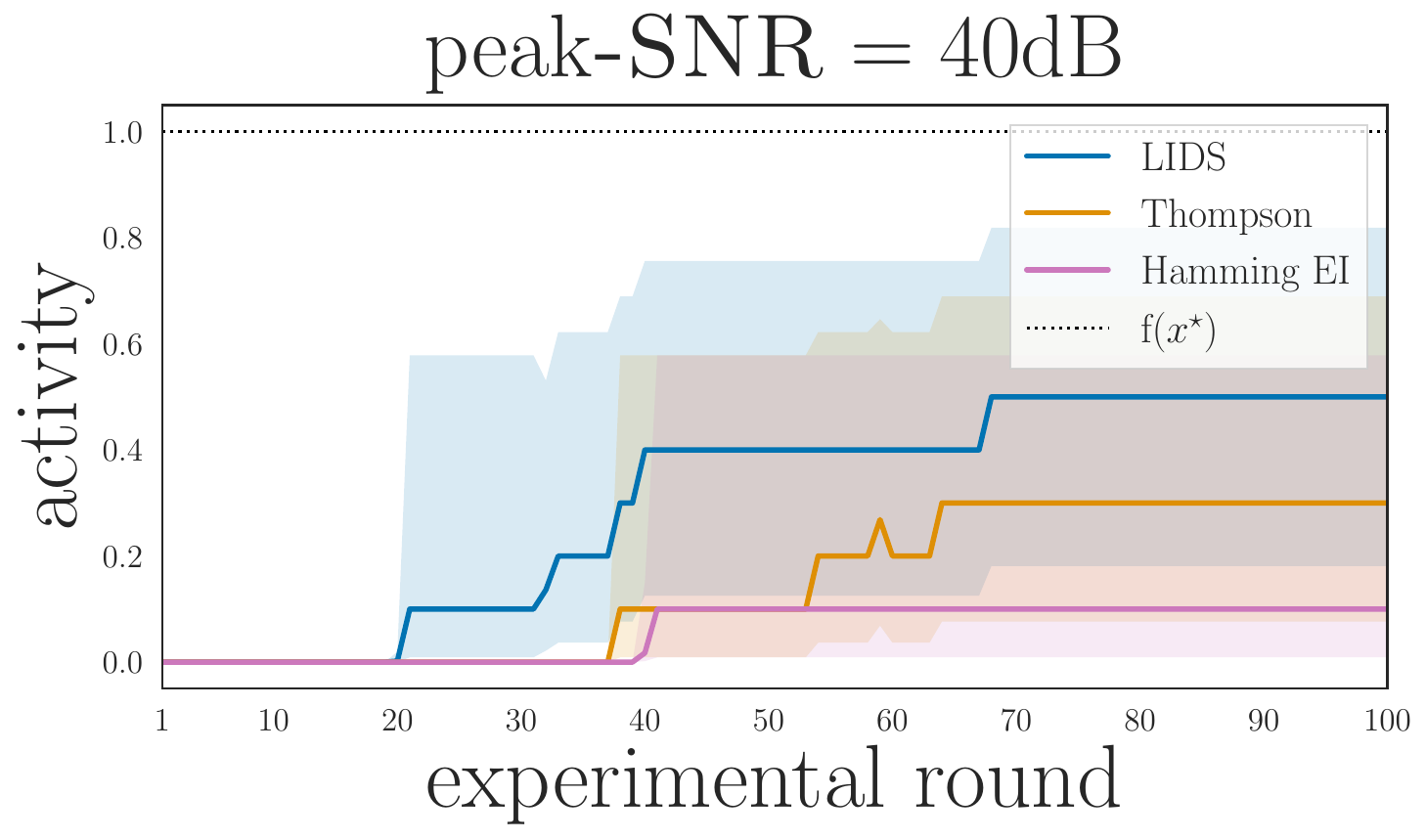}
    \caption{}
    \label{fig:DNA_high}
  \end{subfigure}
\vspace{-1em}
  \caption{\textbf{Synthetic $\f$ with rare activity.} LIDS, Thompson, and Hamming EI, at varying noise-level $\sigma$, optimizing a synthetic oracle. The activity of the predicted optimum, $\f(\hat{x}^\star_t)$, is plotted as a function of experimental round $t$, averaged over 10 duplicates. The 95\% CI is calculated using a logit transform and the delta method, so that it stays bounded in the interval $[0,\f(x^\star)]$ (See \Cref{app:CI}). }
  \label{fig:DNA_empirical}
\vspace{-1em}
\end{figure}

\subsection{Protein language model oracle}\label{sec:res_plm}

We next evaluate LIDS on a peptide discovery problem, where the oracle molecule-activity map $\f$ is estimated from large scale data.

\paragraph{Setup}

Let $\Xc = \Bc^L$ be the set of peptides of length $L$, where $\Bc=\{A,C,...\}$ is the 20 natural amino acids. 
As an oracle $\f: \Xc \to \Rb$, we use an estimate of evolutionary fitness based on a pre-trained protein language model (PLM),
\begin{equation} \label{eqn:plm_f}
    \f(x) \triangleq \left(\log \frac{\pr_\textsc{plm}(x \mid x^{wt}_{1:\tau})}{\pr_\textsc{plm}(x^{wt}_{\tau+1:\tau+L} \mid x^{wt}_{1:\tau})} + 1\right)_+
\end{equation}
where we set $\pr_\textsc{plm}(x)$ to ProGen2's likelihood and $x^{wt}$ to a wild-type protein sequence \citep{nijkamp_progen2_2023}. The likelihood ratio is the \textit{evolutionary index}, which compares the probability of evolution generating $x$ versus a natural sequence; it has been shown to reliably correlate with diverse measures of protein activity \citep{hopf_mutation_2017,riesselman_deep_2018}. The conditional $\pr_\textsc{plm}(x \mid x^{wt}_{1:\tau})$ is predictive of whether a peptide $x$ will interact with $x^{wt}_{1:\tau}$ \citep{chen_target_2026}. Sequences below a threshold evolutionary index are typically dysfunctional or pathogenic, so we apply a cutoff \citep{frazer_disease_2021}.
Unlike benchmarks based on raw experimental data, $\pr_\textsc{plm}(x)$ is defined over all of sequence space, allowing us to compute the activity of a mixture \citep{notin_proteingym_2023}.
Crucially, we can approximate the integral $\langle \f, q \rangle = \Eb_{x \sim q(x)}[\f(x)]$ even when active sequences are rare under $\f$, by generating samples from the same PLM whose likelihood defines $\f$.
We use importance sampling with a defensive mixture as the proposal, $s(x) = \frac{1}{2} q(x) + \frac{1}{2}\pr_\textsc{plm}(x \mid x^{wt}_{1:\tau})$, ensuring we evaluate $q(x)\f(x)$ in regions where either $q$ or $\f$ are high   \citep{hesterberg_weighted_1995} (\Cref{app:IS}).

We consider three wild-type proteins $x^{wt}$: the CO$_2$-anhydrase SazCA, the CO$_2$-capturing RuBisCO, and the plastic-degrading PETase.
We prompt with the first $\tau=95$ amino acids of the wild-type, and design the next 5, so the molecular space $\Xc = \Bc^5$ is size $d=|\Xc|=3{,}200{,}000$ molecules.
The LIDS model is given an unbiased prior $\pi(\theta)$, with no evolutionary or physicochemical data that could be predictive of $\pr_\textsc{plm}(x)$.
LOCK uses BLOSUM, a model of amino acid correlations based on evolutionary data \citep{jankowiak_flexible_2026,Henikoff1992-ks}.
At every experimental round, we test a batch of $E=96$ designs.
We run 5 repeats, with independent initialization and measurement noise with $\textsc{PSNR}=80$dB.
We run the individual synthesis methods for 100 rounds and, due to computational expense, LIDS for 5. Details in \Cref{app:fplm}

\begin{figure}
  \begin{subfigure}{0.33\textwidth}
    \includegraphics[width=\linewidth]{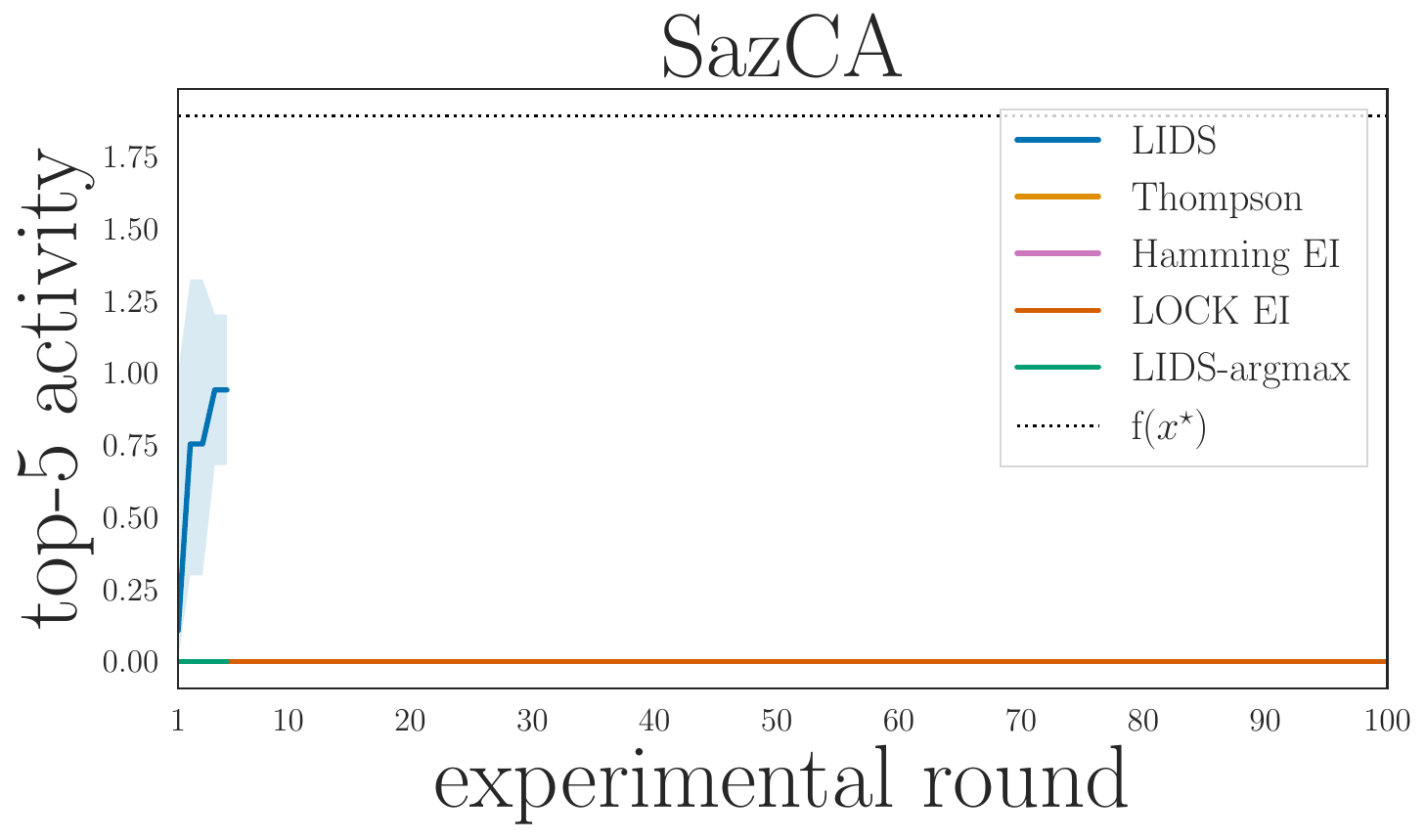}
    \caption{}
    \label{fig:AA_PETase}
  \end{subfigure}
  \hfill
  \begin{subfigure}{0.33\textwidth}
    \includegraphics[width=\linewidth]{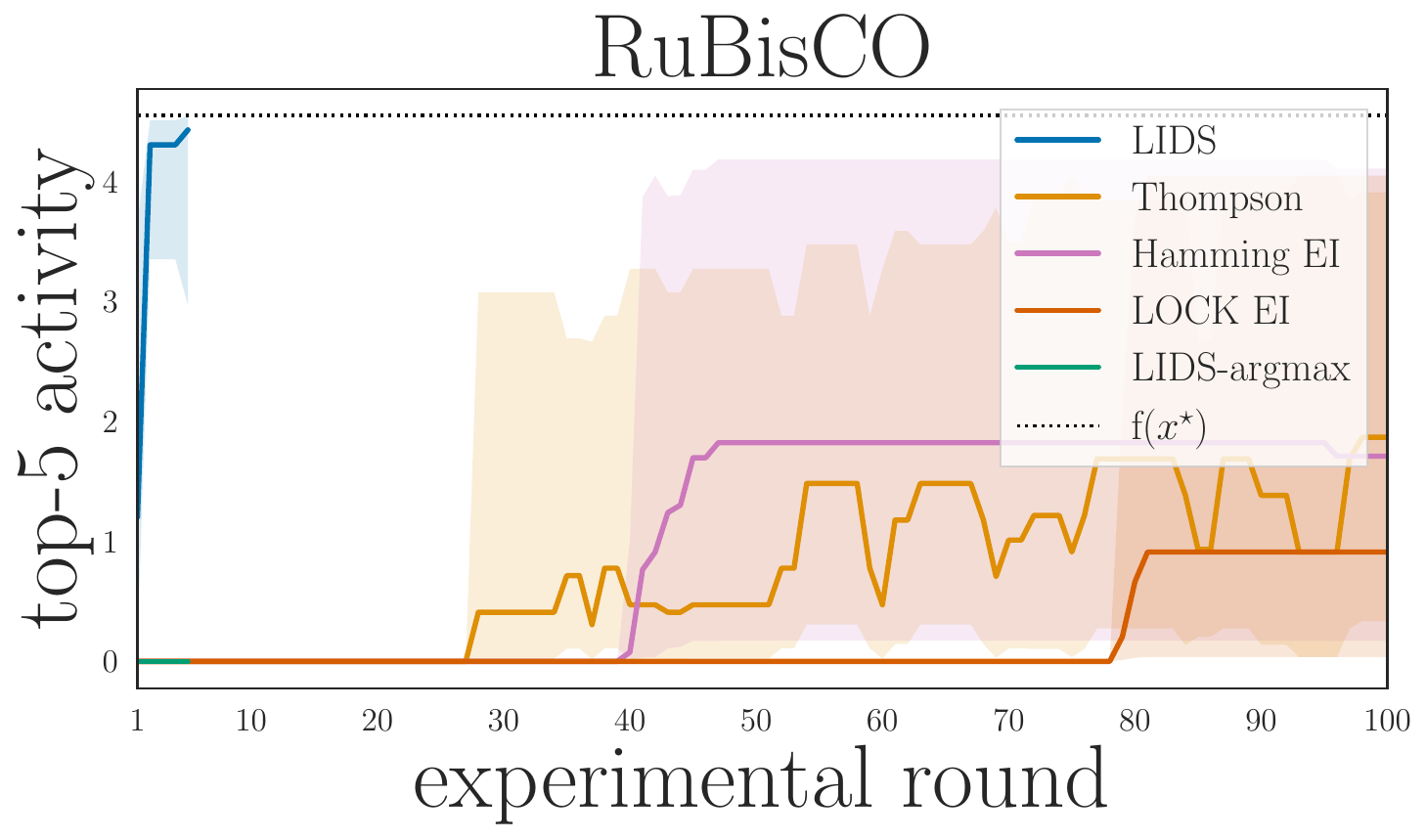}
    \caption{}
    \label{fig:AA_rubisco}
  \end{subfigure}\hfill
  \begin{subfigure}{0.33\textwidth}
    \includegraphics[width=\linewidth]{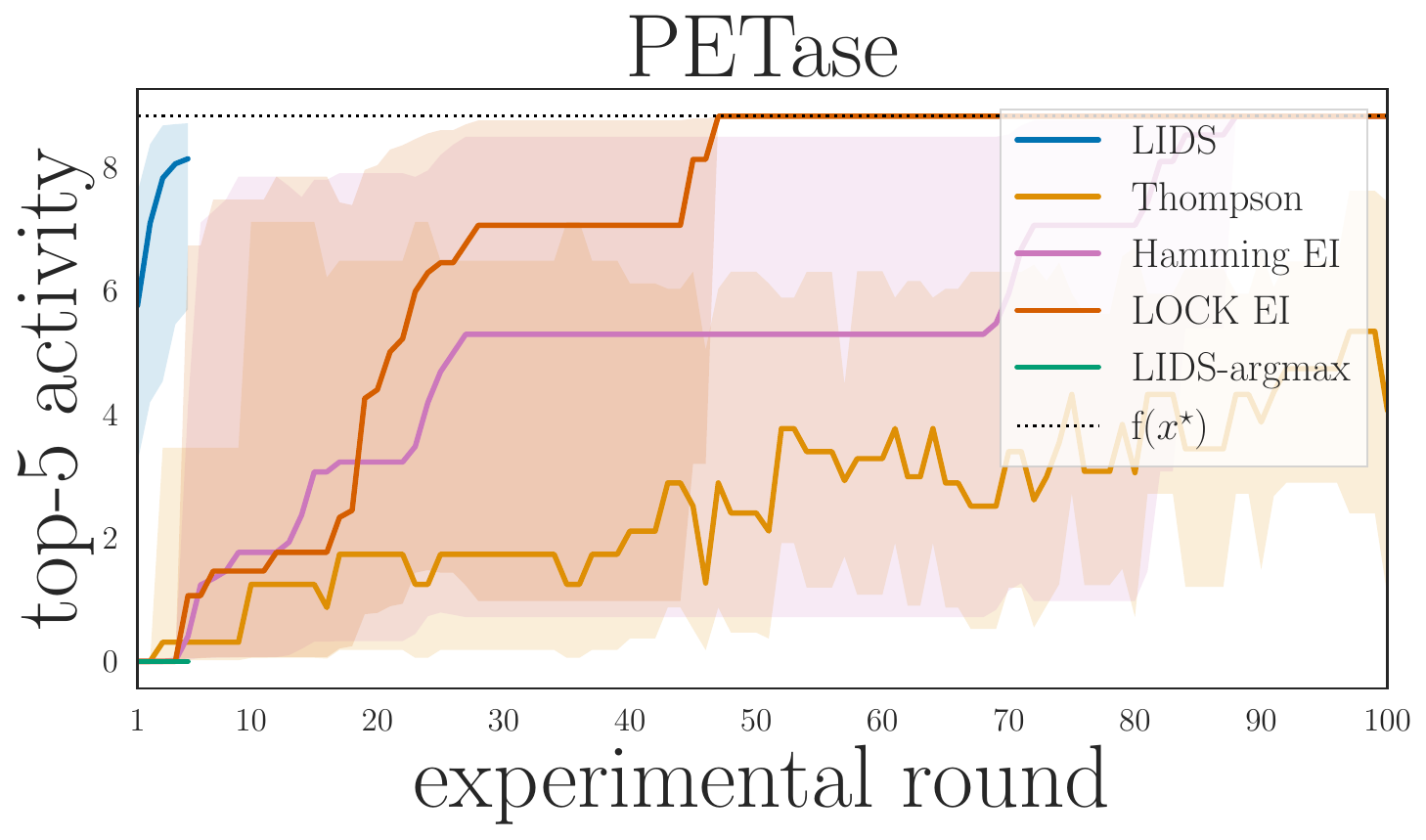}
    \caption{}
    \label{fig:AA_PETase}
  \end{subfigure}
\vspace{-2em}
  \caption{\textbf{PLM oracle.} LIDS, Thompson, Hamming EI, and LOCK EI, applied to a PLM oracle, with low noise ($\textsc{PSNR}=80$dB). The maximum activity of the 5 latest test molecules $\max \f(\hat{x}^\star_{t-5}),...,\f(\hat{x}^\star_{t})$, is plotted as a function of the experimental round $t$, averaged over 5 duplicates. The 95\% CI is calculated using a logit transform and the delta method, so that it stays bounded in the interval $[0,\f(x^\star)]$ (See \Cref{app:CI}). }
  \label{fig:AA_empirical}
\vspace{-1em}
\end{figure}

\paragraph{Results} LIDS finds high-activity molecules faster than methods based on individual synthesis (\Cref{fig:AA_empirical}).
LIDS' improvement is smallest on the PETase landscape, where active sequences are more common: here LIDS finds the global optimum in 3 rounds (median across repeats), whereas the GP methods take 27 rounds (median) (\Cref{tab:rounds_sparse_aa}).
On the RuBisCO landscape, LIDS finds the global optimum in 2 rounds (median), whereas individual synthesis methods did not find it within 100 rounds, in the majority of repeats (\Cref{tab:rounds_sparse_aa}).
We kept running Hamming EI for an additional 100 rounds; even so, it only found the optimum in 3/5 repeats.
On the SazCA landscape, LIDS does not find the unique global optimum, but identifies sequences with non-zero activity within 5 rounds, while the alternatives find nothing after 100 rounds (\Cref{fig:AA_empirical}a).
When LIDS failed to find the global optimum, it appeared to be due to inefficient posterior sampling. 
Reducing the number of posterior samples $M$ degraded LIDS' performance (\Cref{fig:M-ablation}).

\begin{wrapfigure}{r}{0.35\textwidth}
    \vspace{-2em}
    \centering
    \includegraphics[width=\linewidth]{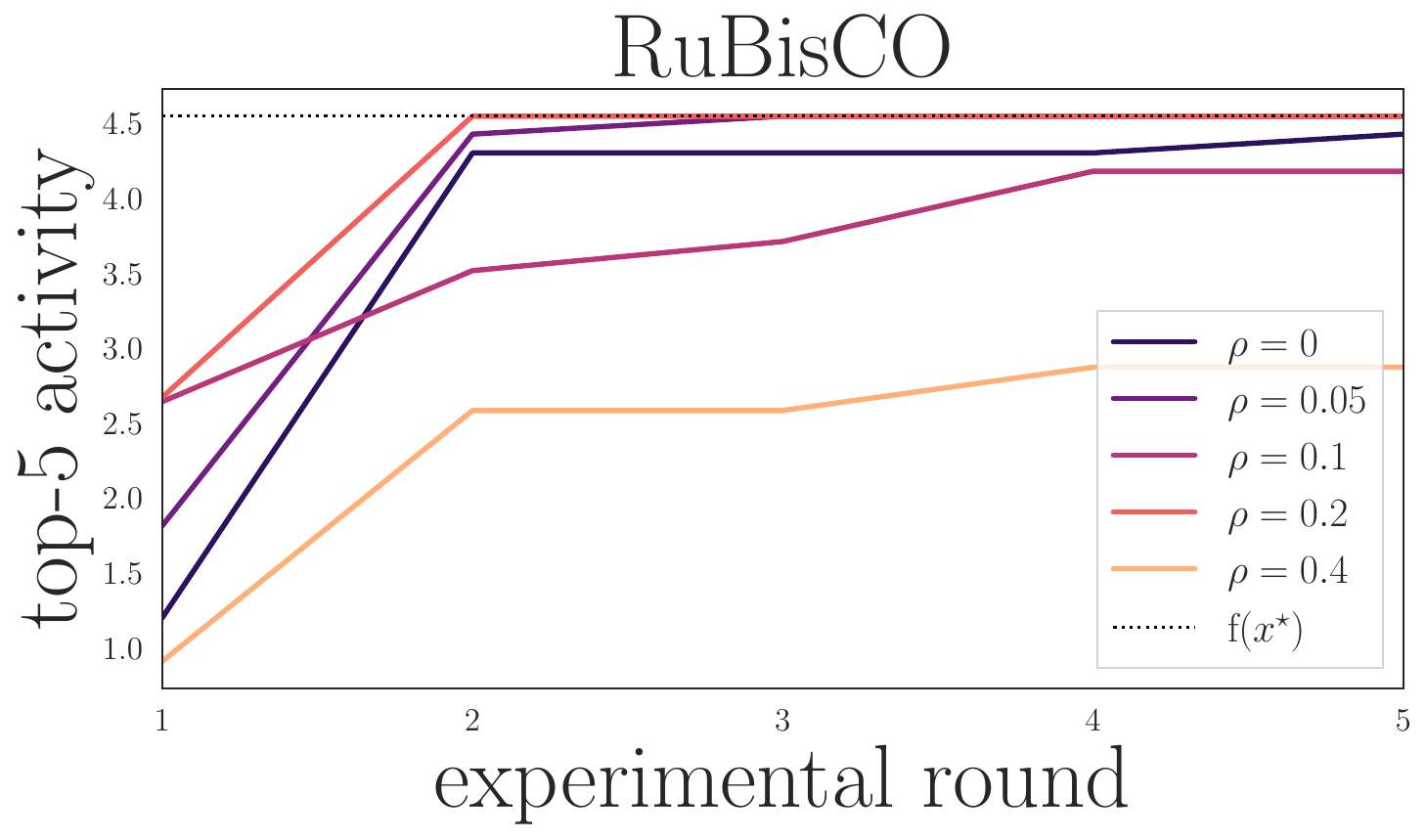}
    \vspace{-2em}
    \caption{\textbf{LIDS with synthesis errors.} We add random errors to the designed mixtures $q_t$ by replacing $\beta_{sl}$ with $(1-\rho)\beta_{sl} + \rho\omega$ where $\omega \sim \mathrm{Dir}(\vec{1}_B)$.}
    \label{fig:rho-ablation}
    \vspace{-3em}
\end{wrapfigure}
LIDS' advantage is from information-dense stochastic synthesis. 
Ablating stochastic synthesis, by replacing each design $q_t$ with its optimum, ablated performance entirely (\textit{LIDS-argmax}).
So did ablating stochastic synthesis by using Thompson sampling with the same model $\f_\theta$ and prior $\pi(\theta)$ as LIDS (\textit{Thompson}).
Using a SOTA GP model, containing additional prior information about $\f$, did not compensate for the advantages of stochastic synthesis (\textit{LOCK}). 

LIDS is robust: its model $\f_\theta$ is misspecified for these oracle $\f$.
Adding synthesis errors, so that the tested $q_t$ differs from the design, began substantially reducing performance only when 40\% of every mixture $\beta_{sl}$ was corrupted (\Cref{fig:rho-ablation}).

\section{Discussion}

We showed that using information-dense stochastic synthesis, instead of synthesizing and testing individual molecules, can lead to very large speedups on hard molecular discovery problems.

\paragraph{Limitations.}
First, LIDS has only been evaluated in simulation, not in the wetlab.
Second, LIDS shifts costs from experimentation to computation: it requires fewer wetlab experiments but much more compute than individual synthesis.
The challenge is LIDS requires (a) calculating the predicted activity of a complex mixture rather than a single molecule, (b) optimizing the EIG, and (c) large numbers of posterior samples.
Scaling LIDS to larger molecules and more complex surrogate models $\f_\theta$ will require accelerating Bayesian experimental design, e.g. using amortization \citep{foster_unified_2020,foster_deep_2021,Hedman2025-zs}.
The mixture's activity $\langle \f_\theta, q\rangle$ could remain tractable if the surrogate $\f_\theta$ is a smooth and decomposable probabilistic circuit \citep{vergari_compositional_2021}.

\paragraph{Outlook.}
Scaling molecular machine learning requires scaling laboratory feedback.
One axis for scaling is performing more experiments. When testing individual molecules, this is essentially the only hope, since no adaptive algorithm can overcome a sparse activity map \citep[\Cref{thm:indiv-limit}]{Bach2020-yw}.
LIDS opens another, complementary approach to scaling: achieving precise control over synthesis, higher synthesis yields, and higher assay signal-to-noise ratios.
These improvements do not change the number of datapoints, but rather the information they carry.
This opens new opportunities and directions for scaling automated molecular science.

\subsubsection*{Acknowledgments}
We wish to thank Victor Veitch, Katrine Qvortrup, Elizabeth Wood, and Alan Amin for useful discussions and suggestions. 
This work was supported by a Start Package Grant from the Novo Nordisk Foundation to ENW.
We thank the DTU Computing Center and the Pioneer Centre for AI for providing compute resources and support.

\bibliography{references,references-enw}
\bibliographystyle{iclr2027_conference}

\appendix
\crefalias{section}{appendix}  \crefalias{subsection}{appendix}

\section{Notation}\label{app:notation}

\begin{table}[H]
\begin{tabular}{ll}
\hline
Notation                                                                                    & Description                                                                                           \\ \hline
$\Xc$                                                                                       & The molecular space, usually a Hamming space                                                          \\ \hline
$\Pc(\Xc)$                                                                                  & \makecell[l]{$=\{\pr(\cdot) : \pr(x) \geq 0, \int_\Xc\pr(x) \di x = 1\}$                              \\
                                                                                            The set of probability distributions on $\Xc$}                                                          \\ \hline
$\f: \Xc \to \Rb_+$                                                                         & The molecule-activity map                                                                             \\ \hline
$q: \Xc \to \Rb_+$                                                                          & \makecell[l]{The design - can be a mixture or a delta.                                                \\
                                                                                            The restriction on $q$ comes from the synthesis model.}                                                 \\ \hline
$\bar{q}= \frac{q}{c} \in \Pc(\Xc)$                                                         & \makecell[l]{The normalized design, where $c$ is the yield,                                           \\
                                                                                            the total number of molecules produced by a synthesis.}                                                 \\ \hline
$c \in \Rb_+$                                                                               & \makecell[l]{The chemical yield. The total number of molecules                                        \\
                                                                                            produced by the synthesis model.}                                                                       \\ \hline
$\epsilon \sim \mathrm{Normal}(0,\sigma)$                                                   & The assay noise, drawn from a Gaussian.                                                               \\ \hline
$\Ib[\cdot]$                                                                                & \makecell[l]{The indicator function, which takes                                                      \\
                                                                                            value 1 when the input is true, and 0 otherwise.}                                                       \\ \hline
$\unif(z)$                                                                                  & A uniform distribution over a set $z$                                                                 \\ \hline
$[d] \triangleq \{1, \ldots, d\}$                                                           & A set of d elements                                                                                   \\ \hline
$\Delta_{B-1} = \{v \in \Rb^B_+: \sum_j v_j = 1\}$                                          & The simplex.                                                                                          \\ \hline
$\vec{1}_d$                                                                                 & The $d$-vector of all ones                                                                            \\ \hline
$\mathrm{LSE}_{k=1}^K[x_k] = \log \sum_{k=1}^K\exp[x_k]$                                    & The 'logsumexp' operation.                                                                            \\ \hline
\end{tabular}
\end{table}

\section{Proof of \Cref{thm:indiv-limit}} \label{apx:proof-indiv-limit}

Assume $\f(x) = \Ib(x = x^\star)$ where $x^\star \sim \unif(1, \ldots, d)$, and there is no noise in the assay, so we measure $y_t = \f(x_t)$. Define expected regret, $R \triangleq \Eb_{x^\star}[\Eb_\pi[\sum_{t=1}^\infty \Ib(X_t \neq X^\star)]]$. 
Let $\pi_0(x_t \mid \Dc_{t-1})$ denote a policy that selects the next candidate uniformly at random from among the remaining candidates until the hit is discovered.
\begin{equation}
    \pi_0(x_t \mid \Dc_{t-1}) =
    \begin{cases}
    x_t \sim \unif(\{x \notin \Dc_{t-1}\}) &\text{ if } x^\star \notin \Dc_{t-1}\\
        x_t = x^\star &\text{ otherwise }
    \end{cases} 
\end{equation}
This policy has expected regret $(d-1)/2$. There does not exist a policy $\pi$ with lower expected regret.

\begin{proof}
We first calculate the regret of the randomized policy $\pi_0$. We have
\begin{align}
    R_0 &= \Eb_{x^\star}[\Eb_\pi[\sum_{t=1}^\infty \Ib(X_t \neq X^\star)]]\\ &= \sum_{t=1}^\infty \pr(x_t \neq x^\star) \\
    &=\pr(x_1 \neq x^\star) + \pr(x_1 \neq x^\star)\pr(x_2 \neq x^\star \mid x_1 \neq x^\star) + \pr(x_{1:2} \neq x^\star) \pr(x_3 \neq x^\star \mid x_{1:2} \neq x^\star) + \cdots\\
    &= \frac{d-1}{d} + \frac{d-1}{d} \frac{d-2}{d-1} + \frac{d-2}{d}\frac{d-3}{d-2} + \cdots\\\
    &= \frac{d-1}{d} + \frac{d-2}{d} + \cdots + \frac{1}{d}\\ 
    &= \frac{d-1}{2}
\end{align}

Next consider an arbitrary policy $\pi$. We will show that it cannot achieve lower regret than the randomized policy. 
We assume without loss of generality that $\pi$ plays $x_t = x^\star$ once $x^\star \in \Dc$, since if it does not, its regret will only be improved by substituting this rule.
Now start with the regret at $t=1$. We have
\begin{equation}
    \Eb_{x^\star} [\Eb_{\pi}[\Ib[X_1 \neq X^\star]]] = \frac{1}{d} \sum_{x^\star=1}^d \pi(x_1 \neq x^\star) = \frac{1}{d} \sum_{x^\star=1}^d [1 - \pi(x_1 = x^\star)] = \frac{1}{d}(d - 1)
\end{equation}
Next consider the regret at $t=2$. We have
\begin{align}
    \Eb_{x^\star}[\Eb_\pi[\Ib[X_{1:2} \neq X^\star]]] &= 
    \pr(x_1 \neq x^\star) \Eb_{\pr(x_1 \mid x_1 \neq x^\star) \pr(x^\star \mid x_1 \neq x^\star, x_1) \pr(x_2 \mid x_1, x^\star, x_1 \neq x^\star)}[\Ib(x_2 \neq x^\star)]\\
    &= \pr(x_1 \neq x^\star) \Eb_{\pr(x_1 \mid x_1 \neq x^\star) \pr(x^\star \mid x_1 \neq x^\star, x_1)}[\pi(x_2 \neq x^\star \mid x_1, y_1 = 0)]\\
    &= \pr(x_1 \neq x^\star) \Eb_{\pr(x_1 \mid x_1 \neq x^\star)}[ \frac{1}{d-1} \sum_{x^\star \neq x_1} \pi(x_2 \neq x^\star \mid x_1, y_1 = 0)]\\
    &= \pr(x_1 \neq x^\star) \Eb_{\pr(x_1 \mid x_1 \neq x^\star)}[1 - \frac{1}{d-1} \sum_{x^\star \neq x_1} \pi(x_2 = x^\star \mid x_1, y_1=0)]\\
    &\ge \pr(x_1 \neq x^\star) \Eb_{\pr(x_1 \mid x_1 \neq x^\star)}[1 - \frac{1}{d-1}]\\
    &= \frac{d-1}{d} \frac{d-2}{d-1} = \frac{d-2}{d}
\end{align}
Where for the second line we have used the fact that the distribution of $x_2$ just depends on the policy, which just depends on $x_1, y_1$ and not on the unobserved $x^\star$. The inequality is strict when $\pi$ does not assign any probability to the previous observation $x_1$ when $y_1 = 0$.

Repeating this argument for arbitrary $t$,
\begin{align}
    \Eb_{x^\star}[\Eb_\pi[\Ib[X_{1:t} \neq X^\star]]] &= \pr(x_{1:t-1} \neq x^\star) \Eb_{\pr(x_{1:t-1} \mid x_{1:t-1} \neq x^\star) \pr(x^\star \mid x_{1:t-1} \neq x^\star, x_{1:t-1})}[\pi(x_t \neq x^\star \mid x_{1:t-1}, y_{1:t-1}=0)]\\
    &= \pr(x_{1:t-1} \neq x^\star) \Eb_{\pr(x_{1:t-1} \mid x_{1:t-1} \neq x^\star)} \frac{1}{d - t+1} \sum_{x^\star \neq x_{1:t-1}} \pi(x_t \neq x^\star \mid x_{1:t-1}, y_{1:t-1}=0)\\
    &\ge \pr(x_{1:t-1} \neq x^\star) \frac{d-t}{d-t+1}\\
    &= \frac{d-t}{d}
\end{align}
where the final step is from induction over $t$.

Summing over all $t$, we obtain the result.

\end{proof}

\section{Popular acquisition functions are not useful to optimize mixtures} \label{app:acquisition_functions}

Most acquisition functions are useless when it comes to designing mixtures. The proof for the Upper Confidence Bound and Expected Improvement acquisitions functions are given below, which hinge on the fact that while $\f: \Xc \to \Rb$ is an arbitrary bounded function, in a lifted setting $\langle \f, q \rangle$ is an affine function over the convex set $q \in \Pc(\Xc)$. 

Define the set of delta functions as $\Sc \triangleq \{\delta_x :x\in \Xc\} \subset \Pc(\Xc)$

\paragraph{Upper Confidence Bound}

The UCB acquisition function for a mixture is defined as \citep{auer_finite-time_2002, abbasi-yadkori_improved_2011}
\begin{equation}
    \mathrm{UCB}(q) \triangleq \sup_{\f \in \Fc_{t-1}} \langle \f, q \rangle,
\end{equation}
where $\Fc_{t-1}$ is a confidence set of possible $\f$, conditioned on the data at timepoint $t-1$. Note when $q \in \Sc$ this reduces to $\mathrm{UCB}(x) = \sup_{\f \in \Fc_{t-1}} \f (x)$ \citep{abbasi-yadkori_improved_2011}. As this is the pointwise supremum over linear functions in $q$, $\mathrm{UCB(q)}$ is convex in $q$. The next mixture is chosen by $q_t = \amax_{q \in \Pc(\Xc)} \mathrm{UCB}(q)$. Since $\Sc \subset \Pc(\Xc)$ we have the inequality $\max_{q \in \Sc} \mathrm{UCB}(q) \leq \max_{q \in \Pc(\Xc)} \mathrm{UCB}(q)$. A second inequality follows by Jensen's inequality:
\begin{align}
    \mathrm{UCB}\left (\int_\Xc q(x)\cdot \delta_x \di x \right ) &\leq \int_\Xc q(x) \mathrm{UCB}(\delta_x) \di x \\
    &\leq \max_{x \in \Xc} \mathrm{UCB}(\delta_x) = \max_{q \in \Sc} \mathrm{UCB}(q)
\end{align}
This establishes that $\max_{q \in \Sc} \mathrm{UCB}(q) = \max_{q \in \Pc(\Xc)} \mathrm{UCB}(q)$, and as such there must always exist a $q^\star \in \Sc$ which is a maximizer of $\mathrm{UCB}(q)$.

\paragraph{Expected Improvement}

Let $y = \langle \f, q\rangle$ denote the experimental outcome, and $y^\star = \underset{i < t}{\max}\, y_i$ denote the maximum experimental outcome so far. The improvement is defined as $I(q) \triangleq \max (0, \langle \f, q \rangle - y^\star)$, and the expected improvement is the expectation over the randomness in $\f$,

\begin{equation}
    \mathrm{EI}(q) = \Eb_{\pr(\f \mid \Dc_{t-1})} [\max (0, \langle \f, q \rangle - y^\star)].
\end{equation}

Inside the expectation is the max of two affine functions, $\max(0, \langle \f, q\rangle - y^\star)$. Since the max of two affine functions is a convex function, $\mathrm{EI}(q)$ is an expectation over terms that are convex in $q$, which is itself, a convex function. 
It then follows from the same logic as for the $\mathrm{UCB}$ acquisition function, that $\max_{q \in \Sc} \mathrm{EI}(q) \leq \max_{q \in \Pc(\Xc)} \mathrm{EI}(q)$, and
\begin{align}
    \mathrm{EI}\left (\int_\Xc q(x)\cdot \delta_x \di x \right ) &\leq \int_\Xc q(x) \mathrm{EI}(\delta_x) \di x \\
    &\leq \max_{x \in \Xc} \mathrm{EI}(\delta_x) = \max_{q \in \Sc} \mathrm{EI}(q),
\end{align}
and as such there must always exist a $q^\star \in \Sc$ which is a maximizer of $\mathrm{EI}(q)$.

\section{Derivation of variational lower bound on EIG}\label{app:variational_lower_bound}

We can construct a lower variational bound on the mutual information, $\Ic_{BA}$, where $r_\phi$ denotes a variational distribution with parameters $\phi$, and $\KL(\pr\| r) \geq 0$ denotes the KL divergence between distributions $\pr$ and $r$ \citep{barber_information_2003}.
To rewrite the expectation over $y_t$, we use the fact that $y_t$ is a deterministic function of $q_t$, $\epsilon_t$ and $\theta$, using $y_t = \langle \f_\theta, q_t \rangle + \sigma \epsilon_t$ for some noise level $\sigma$ and $\epsilon_t \sim \mathrm{Normal}(0,1)$.

\begin{align}
	EIG_{x^\star}(q_t) &= \Hc(\pr(x^\star_\theta \mid \Dc_{t-1})) + \Eb_{\pr(\theta \mid \Dc_{t-1})\pr(y_t \mid q_t, \theta)}[\log \pr(x^\star_\theta \mid \{y_t, q_t\})]\\
	&= \Hc(\pr(x^\star_\theta \mid \Dc_{t-1})) + \Eb_{\pr(\theta \mid \Dc_{t-1})\pr(y_t \mid q_t, \theta)}[\log r_\phi(x^\star_\theta \mid y_t) + \KL(\pr(\cdot \mid \{y_t, q_t\})\|r_\phi(\cdot \mid y_t))]\\
	&\geq \Hc(\pr(x^\star_\theta \mid \Dc_{t-1})) + \Eb_{\pr(\theta \mid \Dc_{t-1})\pr(y_t \mid q_t, \theta)}[\log r_\phi(x^\star_\theta \mid y_t)]\\	
	&= \Hc(\pr(x^\star_\theta \mid \Dc_{t-1})) + \Eb_{\pr(\theta \mid \Dc_{t-1})\pr(\epsilon_t)}[\log r_\phi(x^\star_\theta \mid \langle \f_\theta, q_t \rangle + \sigma \epsilon_t)]\\
	&\triangleq \Ic_{BA}(q_t, \phi)
\end{align}

\section{Details on variational distribution $r_\phi$} \label{app:variational_dist}

\subsection{Architecture}

The variational distribution approximates the conditional $\pr(x^\star \mid y, q) \approx r_\phi(x^\star \mid y)$. As the design $q=q_\psi$ is fixed for any step of training, we don't include it as an input to the variational distribution. The effect of $\psi$ on the EIG enters through the likelihood $\pr(y \mid \langle \f_\theta, q_\psi\rangle)$.
For a batch of experimental outcomes $\bar{y} = [y_t,...y_{t+E}]$ we parameterize the variational distribution $r_\phi: \Rb^E \to \Pc(\Xc)$ as a linear regression from $\bar{y}$ to a position-wise independent categorical distribution $\cat(x \mid \beta) = \prod_{j=1}^L \beta_{jx_{j}}$, by

\begin{equation}
    r_{\phi = (A,C)}(x^\star \mid \bar{y}) = \cat(x^\star \mid \mathrm{softmax} (A \cdot \bar{y} + C)),
\end{equation}
where $A \in \Rb^{L\times B\times E}$ is a weight tensor, and $C\in\Rb^{L\times B}$ is a bias.
The $\mathrm{softmax}$ is applied over the last dimension, which has size $B$.

\subsection{Training}\label{app:variational_dist_training}

For Gaussian noise, using the reparameterization trick to reduce variance \citep{foster_unified_2020}, we obtain the gradient
\begin{equation}
    \nabla_{\psi,\phi} \Ic_{BA}(x^\star ; \psi, \phi) \approx \frac{1}{K} \sum_{j=1}^M  \nabla_{\psi,\phi}\log r_\phi(x^\star_{\theta_j} \mid \langle \f_{\theta_j}, q_\psi \rangle + \sigma \epsilon_j)
\end{equation}
where $\epsilon_{1:M} \overset{iid}{\sim} \mathrm{Normal}(0, 1)$.

The variational distribution parameters $\phi$, as well as mixture parameters $\psi$, are optimized jointly using stochastic gradient ascent on $\nabla_{\psi,\phi} \Ic_{BA}(x^\star ; \psi, \phi)$, using the AdamW optimizer with a learning rate of $0.01$, momentum of $0.9$, and weight decay of $10^{-5}$ \citep{loshchilov_decoupled_2018}. We train all our models on a dataset of 100000 samples of $\theta$, either drawn from the prior $\pi(\theta)$ (see \Cref{app:bnn-prior}), or from the approximate posterior, $\pr(\theta \mid \Dc)$ (see \Cref{app:posterior_approx}). We use 10000 linear warmup steps, and train for either 1000 epochs (for the sparse oracle, \Cref{sec:res_sparse}), or 5000 epoch (for the PLM oracle, \Cref{sec:res_plm}). Training is done on an A100 Nvidia GPU with a batchsize of 512.

\section{Details on Bayesian neural network $\f_\theta$} \label{app:bnn}

We consider a Bayesian neural network $\f_{\theta = (W,V)}(x) = \sum_{n=1}^N W_n \exp(V_n \cdot x)$ as a function class for sparse molecule-activity maps.

\subsection{Sparsity-inducing prior}\label{app:bnn-prior}

We define $w\cdot u_n \triangleq W_n$, and $\log v_n \triangleq V_n$.
We restrict the domains of $u$ and $v$ to be $u \in \Uc = \{u\in [0,1]^N:\|u\|_\infty=1\}$ and $v \in \Vc = \{v\in [0,1]^{N\times L \times B}:\|v_{n,l,\cdot}\|_\infty=1 \ \forall \ n,l\}$.
We can then construct a prior, $\pi(\theta) = \pi(w,u,v)$ that induces sparsity in the filters $V_1,...V_N$ by setting

\begin{align}
    v'_{nlb} &\sim \mathrm{Expon}(\frac{1}{2}) \\
    v_{nlb} &= \exp(v'_{nlb} - \max_b v'_{nlb})
\end{align}
and 
\begin{align}
    \log u'_n &\sim \Nc(0,1) \\
    u_n &= \exp(\log u'_n - \max_n(\log u'_n))
\end{align}

Where $\mathrm{Expon(\gamma)}$ denotes the exponential distribution with rate parameter $\gamma$. 
This choice is motivated by the use of the Laplace distribution for Bayesian LASSO \citep{park_bayesian_2008}.
It carries the belief that only few $v_{nlb}$ are relevant. 
There exists extensive literature on sparsity-inducing priors \citep{carvalho_handling_2009, piironen_sparsity_2017, andersen_bayesian_2014, betancourt_sparsity_2021}. 
We found the process above to be robust to posterior approximation by NUTS, whereas many other priors led to difficult posterior geometry, a significant amount of divergences, and high Gelman-Rubin statistics.
The function $h(x_n) = \exp(x'_n - \max_n x'_n)$ is used to bring $u$ and $v$ to their restricted domains, $u \in \Uc$ and $v \in \Vc$.
Our prior on $w$ carries our belief about the maximum activity that the assay can record, $\|\f\|_\infty \geq w$, with equality if the filters are disjoint $\prod_l \langle v_{nl}, v_{n'l}\rangle = 0 \forall n \neq n'$.
Additionally, when the filters are disjoint, at least one $x^\star$ will have activity $\f_\theta (x^\star) = w$, and all other $x' \neq x^\star$ will have lower activity $\f_\theta(x') \leq w$. 
This makes it easy to incorporate prior information about the maximum activity, by setting $\pi(\|\f\|_\infty) = \pi(w)$. 
In the our simulations we use $\pr(w) = \delta_{w=1}$ (for \Cref{sec:res_sparse}) and $\pr(w) = \mathrm{Gamma}(w-1\mid 1,1)$ (for \Cref{sec:res_plm}), where $\mathrm{Gamma(x \mid \alpha,\beta)}$ is the gamma pdf with shape $\alpha$ and rate $\beta$.

\subsection{Regularization}\label{app:bnn-regularization}
We add a regularization term to the joint distribution, to prevent mode collapse of the different bump functions.
Additionally, this encourages $\f_\theta$ for which the argmax approximation is tighter, since it encourages the filters $V_1,...V_N$ to be disjoint.
This term is defined as 
\begin{equation}
	\log p'(\theta, \Dc) = \log p(\theta, \Dc) - \lambda \cdot \frac{1}{B^L} \sum_{n < m} \prod_{l=1}^{L} \langle v_{n,l}, v_{m,l} \rangle,
\end{equation}
where the normalization is to bring the regularization term to the interval $[0, N_{pairs}]$, and $\lambda$ is a hyperparameter that controls the strength of the regularization. $N_{pairs} = \frac{N(N-1)}{2}$ is the number of pairs of bump functions.

\subsection{Inner product $\langle \f_\theta, q_\psi \rangle$} \label{app:bnn-inner}

The inner product can be computed by
\begin{equation}
	\langle \f_\theta, q_\psi\rangle = c\sum_{s=1}^S \alpha_{s} \sum_{n=1}^N W_n \prod_{l=1}^{L} \langle \exp{V_{nl}}, \beta_{sl} \rangle
\end{equation}
The synthesis model is defined as $\bar{q}_{\psi=(\alpha, \beta)}(x) = \sum_{s=1}^S \alpha_s\prod_{l=1}^L \langle \beta_{sl}, x_l \rangle$.
To see how the inner product $\langle \f_\theta, \bar{q}_\psi \rangle$ can be computed in closed form, first rewrite the network as

\begin{align}
    \f_\theta(x) &= w \cdot \sum_{n=1}^N u_n  \exp{ \sum_{l=1}^L \sum_{b=1}^B \log v_{nlb} \cdot x_{lb}} \\
    &= w \cdot \sum_{n=1}^N u_n  \exp{ \sum_{l=1}^L \log \sum_{b=1}^B v_{nlb} \cdot x_{lb}} \\
    &= w \cdot \sum_{n=1}^N u_n  \prod_{l=1}^L \sum_{b=1}^B v_{nlb} \cdot x_{lb} \\
    &= w \cdot \sum_{n=1}^N u_n  \prod_{l=1}^L \langle v_{nl} \cdot x_{l} \rangle.
\end{align}

The inner product can then be derived as

\begin{align}
    \langle \f_\theta, \bar{q}_\psi \rangle &=\sum_{x \in \Xc} \f_\theta(x)\bar q_\psi(x) \\
    &= \sum_{x \in \Xc} \left(w\cdot \sum_{n=1}^N u_n \prod_{l=1}^L \langle v_{nl}, x_l \rangle \right) \cdot\left(\sum_{s=1}^S \alpha_s \prod_{l=1}^L \langle \beta_{sl}, x_l \rangle \right) \\
    &= \sum_{s=1}^S \alpha_s w \sum_{n=1}^N u_n  \sum_{x \in \Xc} \prod_{l=1}^L \langle \beta_{sl}, x_l \rangle \langle v_{nl}, x_l \rangle \\
    &= \sum_{s=1}^S \alpha_s w \sum_{n=1}^N u_n \prod_{l=1}^L \langle v_{nl}, \beta_{sl} \rangle \\
\end{align}

In practice, for numerical stability, we compute the functional inner product in log-space by

\begin{align}
    \log \langle \f_\theta, \bar{q}_\psi \rangle = \overunderset{S}{s=1}{\mathrm{LSE}}\left[ \log\alpha_s + \log w + \underset{N}{\mathrm{LSE}}\left[ \log u_n + \sum_B\underset{B}{\mathrm{LSE}}\left[\log v_{nlb} + \log\beta_{slb}\right]\right]\right],
\end{align}
where $\mathrm{LSE}_{k=1}^K[x_k] = \log \sum_{k=1}^K\exp[x_k]$ is the 'logsumexp' operation

\subsection{Approximation of $x^\star$}\label{app:bnn-x-star-approximation}

This function class has a tractable approximation of $x^\star = \amax_x \f_\theta(x)$, or equivalently, $\amax_{x} \log \f_\theta(x)$. First, observe that we can write
\begin{align}
	\log \f_\theta(x) =  \log \sum_n \exp(\log w_n + \sum_\ell \log v_{n,\ell}^\top x_\ell)\\
	= \overunderset{N}{n=1}{\mathrm{LSE}} \left[\log w_n + \sum_\ell \log v_{n,\ell}^\top x_\ell\right].
\end{align}
The $\mathrm{LSE}$ operation can be bounded by
\begin{equation}
	\max_n \log w_n + \sum_\ell \log v_{n,\ell}^\top x_\ell \le \overunderset{N}{n=1}{\mathrm{LSE}} \left[\log w_n + \sum_\ell \log v_{n,\ell}^\top x_\ell\right] \le \max_n \log w_n + \sum_\ell \log v_{n,\ell}^\top x_\ell + \log N.
\end{equation}
To approximate the maximum over $\f_\theta(x)$, we can maximize the lower bound. We have
\begin{equation}
	\hat x = \underset{x}{\amax} \max_n \log w_n + \sum_\ell \log v_{n,\ell}^\top x_\ell
\end{equation}
This is tractable. Define
\begin{equation}
	\hat x^n_\ell = \underset{b}{\amax}\, v_{n, \ell}
\end{equation} 
\begin{equation}
	\hat n = \underset{n}{\amax}\, \log w_n + \sum_\ell \log v_{n,\ell}^\top \hat x^n_\ell
\end{equation}
Then $\hat x^\star = \hat x = \hat x^{\hat n}$ is the approximate maximizer. This is exact when the filters are disjoint $\prod_l \langle v_{nl}, v_{n'l}\rangle = 0 \forall n \neq n'$, which is encouraged by the regularizer.

\section{Posterior approximation}\label{app:posterior_approx}

We implement \Cref{eqn:prior,eqn:like} in the probabilistic programming language NumPyro \citep{phan_composable_2019}. To approximate the posterior distribution $\pr(\theta \mid \Dc_{t-1})$, we primarily use the No-U-Turn-Sampler (NUTS) with a diagonal mass matrix, target acceptance probability of 0.8, and maximum tree depth of 10 \citep{phan_composable_2019}.
8 NUTS chains are run in parallel on a single A100 with 2000 warmup samples, drawing a total of 100000 samples. 
We have observed that a large amount of samples is highly beneficial to prevent overfitting of our mixtures, $q_\psi$, especially in the low SNR regime.
If we obtain a Gelman-Rubin statistic (r-hat) greater than 2 we take this as an indication that the posterior is highly focused, preventing NUTS from sampling efficiently.
In this case the posterior approximation switches to stochastic variational inference (SVI), using a mean-field approximation with a Gaussian distribution with diagonal covariance as the variational family.
For SVI we use the AdamW optimizer with a learning rate of $0.001$, momentum of $0.9$, and adaptive gradient clipping at a ratio of $0.1$ \citep{brock_high-performance_2021}. We use a linear warmup of the learning rate over the first 4000 steps, and use a cosine decay throughout the rest of the training, for a total of 200000 steps. We train using the ELBO with 64 particles to reduce variance of the gradient \citep{wingate_automated_2013, ranganath_black_2014}.

\begin{figure}
\centering
\begin{subfigure}[c]{0.36\textwidth}
    \centering
\begin{tikzpicture}

\node[obs] (y) {$y_{e}$};
  \node[obs, left=1.5cm of y] (a) {$x_{ek}$};

\edge {a} {y} ; 

\plate[dashed] {ap} {(a)} {$c$} ;
  \plate {ay} {(ap)(y)} {$E$} ;

\end{tikzpicture}
\caption{}
\end{subfigure}
\begin{subfigure}[c]{0.36\textwidth}
    \centering
\begin{tikzpicture}

\node[obs] (y) {$y_{e}$};
\node[obs, left=1.5cm of y] (a) {$q^x_e$};

\edge {a} {y} ; 

\plate {ay} {(ap)(y)} {$E$} ;

\end{tikzpicture}
\caption{}

\end{subfigure}
\caption{\textbf{Hierarchical causal model.} (a) Our model can be viewed as a hierarchical causal model, where multiple molecules $x_{ij}$ affect the outcome $y_i$ in each experiment $i$, and there are $c$ total molecules. (b) The collapsed hierarchical causal model, where $q^x$ is the distribution of $x$ (elsewhere we drop the superscript $^x$ for conciseness). This model leads to the estimator in \Cref{eqn:prior,eqn:like}.} \label{fig:hcm}
\end{figure}

\section{Proof of \Cref{ex:O1-search}}\label{app:O1-proof}

\begin{example}[No noise, discrete map]
    Assume activity is discrete, s.t. $\f: \Xc \to \Yc$ for $\Yc=\{0, \ldots |\mathcal{Y}|-1\}$. Assume the model is well specified, $\f \in \{\f_\theta : \theta \in \Theta\}$, and $\pi(\f_\theta = \f) > 0$.
    Assume we can synthesize arbitrary mixtures $q \in \Qc = \Pc(\Xc)$, and our assay has no noise, $y_t = \langle \f, q\rangle$. 

    Then, by optimizing the EIG (\Cref{eqn:info-gain}), we find $x^\star$ after running one experiment.
\end{example}
\begin{proof}
    Number the molecules, so $\Xc = \{1, \ldots, d\}$. Set $q(x) = \frac{1}{Z}\frac{1}{|\Yc|^x}$ where $Z = \sum_{x=1}^d \frac{1}{|\Yc|^x}$ is the normalizing constant.
    Now, $y = \langle \f, q \rangle = \frac{1}{Z}\sum_{x=1}^d \frac{\f(x)}{|\Yc|^j} \Rightarrow Z y = \sum_{x=1}^d \frac{\f(x)}{|\Yc|^j}$. So $\f(x)$ is encoded in the digits of $Z y$, e.g. for $|\Yc| = 10$ we have the decimal expansion $Z y = 0.\f(1)\f(2)\f(3)\ldots$. 
So for this experiment, the posterior over $x^\star$ is a point mass at the true $x^\star$, and the EIG is maximized.
    So the posterior entropy can be driven to zero. Optimizing the EIG will find this solution or anything equally as good, which also drives the entropy to zero.
\end{proof}

\section{Speedup for noiseless 1-sparse molecule-activity maps}\label{app:1-sparse_example}

\begin{example}[No noise, continuous output, 1-sparse]
    Assume the molecule activity map is 1-sparse: $\f(x) = w^\star \Ib(x = x^\star)$. Assume our model is well-specified, so $\f_{\theta}(x) = w \Ib(x = x_0)$ for $\theta = (w, x_0) \in \Rb_+ \times \Xc$ and the prior has positive support $\pi(w^\star, x^\star) > 0$. Assume we can synthesize arbitrary mixtures $q \in \Qc = \Pc(\Xc)$, and our assay has no noise, $y_t = \langle \f, q\rangle$. 

    Then, by optimizing the EIG (\Cref{eqn:info-gain}), we find $x^\star$ after running two experiments in parallel.
\end{example}
\begin{proof}
    Set $q_1(x) = \frac{1}{d} \vec{1}_d$, where $\vec{1}_d$ is the $d$-vector of all ones, and set $q_2(x)$ so that no two entries are the same, $q_2(x) \neq q_2(x')$ for any $x \neq x'$. 
    Now, $y_1 = \langle \f, q_1 \rangle = \frac{1}{d} w^\star \Rightarrow w^\star = d y_1$, so $w^\star$ is identified. Moreover, $y_2 = \langle \f, q_2 \rangle = w^\star q_2(x^\star) \Rightarrow q_2(x^\star) = y_2 / w^\star$, so $q_2(x^\star)$ is identified. Since each $q_2(x): x \in \Xc$ is different, $x^\star$ is identified. 
    So for this choice of experiments, the posterior over $x^\star$ is a point mass at the true $x^\star$, and the EIG is maximized.
\end{proof}
\noindent By contrast, if we tested one molecule at a time, we would need $\Oc(d)$ experiments instead of $\Oc(1)$, by the argument in \Cref{thm:indiv-limit}.

\section{Proof of \Cref{ex:2E-ary_search}}\label{app:2E-ary_search-proof}

\begin{proof}
    The EIG executes $2^E$-ary search. After testing $q_1$, the posterior over $x^\star$ is a uniform distribution over the set 
    \begin{equation}
        X_{y_1} \triangleq \begin{cases}
            \{x : q_1(x) > 0\} \text{ if } y_1 = 1\\
            \{x : q_1(x) = 0\} \text{ otherwise},
        \end{cases}
    \end{equation}
    where the probability of $y_1=1$ is $|X_{1}|/d$.
    Testing a batch $q_{1:E}$ and receiving data $y_{1:E}$ we obtain the posterior $\pr(x^\star \mid q_{1:E}, y_{1:E}) = \mathrm{Uniform}(X_{y_{1:E}})$ where $X_{y_{1:E}} \triangleq X_{ y_1} \cap \ldots \cap X_{y_e}$. The probability of $y_{1:E}$ is $|X_{y_{1:E}}|/d$.
    The $X_{y_{1:E}}$ are non-overlapping and partition $\Xc$.
    So the EIG will be maximized if we set $q_{1:E}$ such that $\lfloor \frac{d}{2^E} \rfloor \leq |X_{y_{1:E}}| \leq \lceil \frac{d}{2^E} \rceil$ for all $y \in \{0, 1\}^E$, making all partitions (as close as possible to) equal size. 

    After testing the first batch, we know $x^\star \in X_{y_{1:E}}$, and our posterior is uniform over this domain. Applying the same logic, the next round will cut the number of possible optima down to $\lceil \frac{d}{(2^E)^2} \rceil$, in the worst case.
    Iterating $k$ times, we are left with just one molecule when $1 = \lceil\frac{d}{(2^E)^k} \rceil \Rightarrow k = \lceil \frac{\log_2 d}{E} \rceil$. \end{proof}

\section{Information gain under a detection threshold} \label{app:proof-info-gain-thresh}

Assays struggle to detect activity when only a very small number of physical molecules in the mixture have activity. Rather than assume we can detect any $\langle \f, q \rangle > 0$, consider a detection threshold $\tau$. 
    Consider an assay $y = \Ib[\langle \f, q \rangle > \tau]$. The maximum expected information gain from a single experiment is \begin{equation} \label{eqn:info-gain-thresh}
    \max_{q \in \Qc} \Hc(\mathrm{Bern}(\Prob_\theta(\langle \f_\theta, q \rangle > \tau))),
    \end{equation} where $\Hc(\mathrm{Bern}(r))$ is the entropy of a Bernoulli distribution with parameter $r$.

The Bernoulli entropy is maximized at $\mathrm{Bern}(0.5)$.
Higher detection thresholds restrict our maximum information gain, making it harder and harder for $\Pr_\theta(\langle \f_\theta, q \rangle > \tau)$ to reach $0.5$. 
For example, assume the maximum activity is $\|\f_\theta\|_\infty$ for all $\theta$.
In the extreme where $\tau = \|\f_\theta\|_\infty$, we only have $\Pr_\theta(\langle \f_\theta, q \rangle > \tau) > 0$ if $q(x)$ is a delta mass at a single sequence, i.e. we revert to standard testing of individual molecules.
In sum, we can only take advantage of mixtures if we can achieve high enough yield that more than one molecular species in the mixture passes the detection threshold.

\begin{proof}
    Define the experimental outcome as a random variable by
    \begin{align}
        Y(q) &= \Ib[\langle \f_\theta, q \rangle > \tau]\\
        &= \begin{cases}
            1, &\mathrm{if} \quad \langle\f_\theta, q \rangle > \tau \\
            0, &\mathrm{otherwise},
        \end{cases}
    \end{align}
    where the randomness stems from the randomness in $\theta$. This is a Bernoulli random variable with parameter $\Pr_\theta(\langle \f_\theta, q \rangle > \tau)$.
    The EIG is given by the mutual information between $Y(q)$ and $\theta$, using $\EIG_\theta (q) = \Hc(Y(q)) - \Hc(Y(q) \mid \theta)$.
    Since $Y(q)$ is a deterministic function of $\theta$, the conditional entropy is zero, $\Hc(Y(q) \mid \theta) = 0$. 
    Therefore, the $\EIG$ is 
    \begin{align}
        \EIG_\theta(q) &= \Hc(Y(q))\\
        &= \Hc(\mathrm{Bern}(\Pr_\theta(\langle \f_\theta, q \rangle > \tau)))
    \end{align}
\end{proof}

\section{Proof of \Cref{prop:gaussian_mi_bound}} \label{app:proof-EIGbound}

\begin{proof}The signal of our experiment is $S \triangleq \langle \f_\theta, q \rangle$. This is a random variable with randomness stemming from $\pi$, the prior on $\theta$.

    By the chain rule $\MI(\theta, S; Y) = \MI(S;Y) + \MI(\theta;Y \mid S)$. Since $\pr(y \mid \theta, S) = \pr(y \mid S)$, then $\MI(\theta;Y \mid S) = 0$. So for a fixed design $q$, the mutual information between $Y$ and $S$ is equal to the $\EIG$, $\MI(\theta;Y) = \MI(S;Y) = \EIG_\theta(q)$. 
    We now bound $\MI(S;Y)$.
    The first bound stems from the bound on the entropy of $S$, by bounding the mutual information by the prior entropy as 
    \begin{align}
        \MI(S;Y) &= \Hc(S) - \Hc(S \mid Y) \\
        &\leq \Hc(S) = \Hc(\langle \f_\theta, \bar{q} \rangle),
    \end{align}
    where we use the fact that $S$ comes from a finite set $\Zc$. This is an upper bound on the amount of information we can encode into an experiment. 
    The second bound stems from the channel capacity of an additive gaussian noise channel derived by Shannon \citep{shannon_mathematical_1948} by
    \begin{equation}\label{eq:mi(sy)}
        \MI(S;Y) = h(Y) - h(Y \mid S) = h(Y) - \frac{1}{2}\log (2\pi e \sigma^2).
    \end{equation}
    From our assumptions on $\langle \f_\theta, q\rangle$, $Y$ must have bounded second moment. So, the differential entropy, $h(Y)$, can be bounded by its maximum entropy, which is achieved when $Y$ is Gaussian distributed with variance $\mathrm{Var}(Y) = \mathrm{Var}(S + \epsilon\sigma) = \mathrm{Var}(S) + \sigma^2$
    \begin{equation}\label{eq:h(y)}
        h(Y) \leq \frac{1}{2}\log(2\pi e(\mathrm{Var}(S) + \sigma^2))
    \end{equation}
    Combining \Cref{eq:mi(sy)} and \Cref{eq:h(y)} we get
    \begin{align}
        \MI(S;Y) &\leq \frac{1}{2}\log(2\pi e(\mathrm{Var}(S) + \sigma^2)) - \frac{1}{2}\log (2\pi e \sigma^2) \\
        \MI(S;Y) &\leq \frac{1}{2}\log\left(1 + \frac{\mathrm{Var}(\langle \f_\theta, q \rangle)}{\sigma^2}\right).
    \end{align}
    Defining $\textsc{SNR}(q) \triangleq \frac{c^2}{\sigma^2}\mathrm{Var}(\langle \f_\theta, \bar{q} \rangle)$, and combining the two bounds we get
    \begin{align}
        \EIG_\theta(q) \leq \min\left\{ \Hc(\langle \f_\theta, \bar{q} \rangle), \, \frac{1}{2}\log \left(1 +  \frac{c^2}{\sigma^2}\mathrm{Var}(\langle \f_\theta, \bar{q} \rangle)\right) \right\}.
    \end{align}
\end{proof}

\subsection{$\textsc{SNR}(q)$ is maximized at $\bar q = \delta_{x}$} \label{app:proof-EIGbound2}
\begin{proof}
To maximize $\textsc{SNR}(q) = \frac{c^2}{\sigma^2}\mathrm{Var}_\pi(\langle\f_\theta,\bar q\rangle)$, we need to maximize $\mathrm{Var}_\pi(\langle\f_\theta,\bar q\rangle)$.
Define $\tilde\f_\theta\triangleq\f_\theta-\Eb_\pi[\f_\theta]$, the centered $\f_\theta$. Then $\mathrm{Var}_\pi(\langle\f_\theta,\bar q\rangle) = \Eb_\pi[(\langle\tilde\f_\theta,\bar q\rangle)^2]$. For any fixed $\theta$, $\langle\tilde\f_\theta,\bar q\rangle$ is a linear function in $\bar q$. Since $x^2$ is a convex function, $(\langle\tilde\f_\theta,\bar q\rangle)^2$ is a convex function for any fixed $\theta$.

So, the variance is an expectation over pointwise convex functions, which means that it is a convex function. We are therefore trying to find the maximum of a convex function $\mathrm{Var}_\pi(\langle\f_\theta,\bar q\rangle)$ over a compact convex set $\Pc(\Xc)$. By Bauer's maximum principle, this attains its maximum at an extreme point \citep{bauer_minimalstellen_1958}. For $\Pc(\Xc)$, the extreme points are $\{\delta_x:x \in \Xc\}$, the set of measures concentrated on a single molecule. We can therefore establish 
\begin{equation}
    \exists\delta_{x} \in \underset{\bar q \in \Pc(\Xc)}{\amax} \mathrm{Var}(\langle\f_\theta,\bar q\rangle)
\end{equation}
\end{proof}

\section{Proof of \Cref{ex:gaussian_example}} \label{app:proof-gaussian_example}

Assume the molecule activity map is 1-sparse with maximum value one, $\f_\theta(x) \in \{ \Ib(x = \theta):\theta\in\Xc \}$, and the prior is uniform, $\pi(\theta) = 1/d$.
Assume a synthesis model that creates uniform mixtures, $q(x) = \frac{1}{d_q} \Ib(x \in U)$ for $U \subseteq \Xc$, $\|q\|_1=c=1$ and $d_q \triangleq |U|$. 
Assume Gaussian noise $y = \langle \f_\theta, q\rangle + \sigma \epsilon$ where $\epsilon \sim \mathrm{Normal}(0, 1)$.

\paragraph{EIG bound from \Cref{prop:gaussian_mi_bound}}

\begin{equation}
    \EIG_{\theta}(q) \leq \min\left\{ -p_q\log p_q - (1-p_q)\log(1-p_q), \, \frac{1}{2}\log \left(1 + \frac{1}{\sigma^2}\left(\frac{1}{d_qd} - \frac{1}{d^2}\right)\right) \right\},
\end{equation}
where $p_q \triangleq \frac{d_q}{d}$. The first term is strictly increasing on the interval $d_q \in [0,\frac{d}{2}]$, while the second term is strictly decreasing in $d_q$.

The exact EIG is given by,
\begin{align}
    \EIG_{\theta}(q) = \left(1-\frac{d_q}{d}\right)\KL\left(\Nc\left(0,\sigma^2\right)||p(y)\right) + \frac{d_q}{d}\KL\left(\Nc\left(\frac{1}{d_q},\sigma^2\right)||p(y)\right)
\end{align}
where $p(y) = (1-\frac{d_q}{d}) \cdot \Nc (y;0,\sigma^2) + \frac{d_q}{d}\cdot\Nc(y;1/d_q,\sigma^2)$, with $\Nc(y;\mu,\sigma^2)$ the Gaussian pdf. We approximate this by Monte Carlo to obtain \Cref{fig:MI_vs_dq}.

\begin{proof}
Define the random variable $S(q) = \langle \f_\theta, q \rangle$, where $\theta \sim \pi(\theta)$. It has probability mass function (pmf)
    \begin{equation}
        \pr(S(q)) =
        \begin{cases}
            \frac{d_q}{d}, &\mathrm{if} \,S(q) = \frac{1}{d_q} \\
            1 - \frac{d_q}{d}, &\mathrm{if} \, S(q) = 0
        \end{cases}
    \end{equation}
    This is the pmf of a Bernoulli random variable, scaled by $\frac{1}{d_q}$, which has entropy $\Hc(\pr(S(q))) = -p_q\log p_q - (1-p_q)\log(1-p_q)$, and variance 
    \begin{align}
        \mathrm{Var}_\pi[S(q)] &= \frac{1}{d_q^2} \cdot \frac{d_q}{d}(1-\frac{d_q}{d}) \\
        &=\frac{1}{d_qd} - \frac{1}{d^2}.
    \end{align}
    Plugging into \Cref{prop:gaussian_mi_bound} we get 
    \begin{equation}
        \EIG_{\theta}(q) \leq \min\left\{ -p_q\log p_q - (1-p_q)\log(1-p_q), \, \frac{1}{2}\log \left(1 + \frac{1}{\sigma^2}\left(\frac{1}{d_q d} - \frac{1}{d^2}\right)\right) \right\}.
    \end{equation}
\end{proof}
For this example, we can even compute the exact EIG.
\paragraph{True MI}
\begin{proof}
    The $\EIG$ can be defined as the expected KL-divergence between the prior and the posterior by
    \begin{equation}
        \EIG_\theta(q) = \EIG_S(q) = \Eb_{S(q)}\KL(\pr(Y(q) \mid S(q)) \| \pr(Y(q))),
    \end{equation}
    where $Y(q) = S(q) + \sigma\epsilon$. This is a scaled Bernoulli random variable plus a Gaussian random variable, with pdf $p(y) = (1-\frac{d_q}{d}) \cdot \Nc (y;0,\sigma^2) + \frac{d_q}{d}\cdot\Nc(y;1/d_q,\sigma^2)$, where $\Nc(y;\mu,\sigma^2)$ is the Gaussian pdf. Since $\pr(S(q))$ is a Bernoulli distribution pmf, the $\EIG$ can be simplified to
    \begin{align}
        \EIG_S(q) &= \left( 1-\frac{d_q}{d} \right)\KL(\pr(Y(q) \mid S(q)=0) \| \pr(Y(q))) \\
        &+ \left( \frac{d_q}{d} \right)\KL(\pr(Y(q) \mid S(q)=\frac{1}{d_q}) \| \pr(Y(q))) \\
        &=  \left( 1-\frac{d_q}{d} \right)\KL(\Nc\left(0,\sigma^2\right)) \| \pr(y)) + \left( \frac{d_q}{d} \right)\KL(\Nc\left(\frac{1}{d_q},\sigma^2\right)) \| \pr(y))
    \end{align}
\end{proof}

\section{Proof of \Cref{prop:paninski}} \label{app:proof-paninski}
\begin{proof}
    Let $\Xc=[d]$ be the search space of size $|\Xc| = d$. Let $\f_\theta : \Xc \to \Rb_+$ be an arbitrary molecule-activity map.
    We aim to gain information about $\f_\theta$, not its parameters.
    
    $q \in \Mc^+_c(\Xc)$ is a positive measure subject to a 1-norm constraint $\|q\|_1=c$. Additionally, define the set of delta functions as $\Sc_c(\Xc) = \{c\delta_x :x\in \Xc\} \subset \Mc^+_c(\Xc)$ subject to the same constraint. Assume Gaussian noise $\pr(y \mid \f_\theta, q, \sigma) = \Nc(\langle \f_\theta, q \rangle, \sigma^2)$. The score function for this likelihood is 
    \begin{align}
    \nabla_{\f_\theta}\log\Nc(\langle \f_\theta, q \rangle, \sigma^2) &= \nabla_{\f_\theta}\left(-\frac{1}{2}\log(2\pi\sigma^2) - \frac{(y - \langle \f_\theta, q \rangle)^2}{2\sigma^2}\right)\\
    &= \nabla_{\f_\theta}\left( - \frac{(y - \langle \f_\theta, q \rangle)^2}{2\sigma^2}\right)\\
    &= \frac{y - \langle \f_\theta, q \rangle}{\sigma^2}\nabla_{\f_\theta} \langle \f_\theta, q \rangle\\
    &= \frac{y - \langle \f_\theta, q \rangle}{\sigma^2} q
    \end{align}
    The information matrix is given by the outer product of the score function, evaluated at the true parameter $\f_{\theta_0}$ by 
    \begin{align}
        \Ic_{\f_{\theta_0}}(q) &= \Eb_{\pr(y \mid q, \f_\theta)}\left[\left.\nabla_{\f_\theta}\log\pr(y \mid q, \f_\theta) \nabla_{\f_\theta}\log\pr(y \mid q, \f_\theta)^\top\right|_{\f_\theta=\f_{\theta_0}}\right]\\
        &= \Eb_{\pr(y \mid q, \f_\theta)}\left[\left.\left(\frac{y - \langle \f_\theta, q \rangle}{\sigma^2} q\right)\left(\frac{y - \langle \f_\theta, q \rangle}{\sigma^2} q\right)^\top\right|_{\f_\theta=\f_{\theta_0}}\right]\\ 
        &= \frac{1}{\sigma^4}\Eb_{\pr(y \mid q, \f_\theta)}\left[\left.\left(y - \langle \f_\theta, q \rangle \right)^2 q \otimes q\right|_{\f_\theta=\f_{\theta_0}}\right]\\
        &=  \frac{\Eb_{\pr(y \mid q, \f_\theta)}\left[\left.\left(y - \langle \f_\theta, q \rangle \right)^2\right|_{\f_\theta=\f_{\theta_0}}\right]}{\sigma^4} q \otimes q\\
        &= \frac{1}{\sigma^2} q \otimes q
    \end{align}
    The result from Paninski \citep{paninski_asymptotic_2005} states that for information optimal sampling, the posterior $\pr(\f_\theta \mid \Dc_t)$ is asymptotically normal, with covariance matrix $\frac{\sigma^2_{info}}{t}$ determined by
    \begin{equation}
        \sigma^2_{info}(\Qc) = \left(\underset{C \in co(\Ic_{\f_{\theta_0}}(\Qc))}{\amax} \det(C)\right)^{-1},
    \end{equation}
    where $co(\Ic_{\f_{\theta_0}}(\cdot))$ denotes the convex closure of the set of information matrices, and $\det(A)$ is the determinant of a matrix $A$.
    We will now show how restricting $\Qc$ to $\Sc_c(\Xc)$ does not change the asymptotic variance over the case of the full measure $\Mc^+_c(\Xc)$.
    First, note that $\Ic_{\f_{\theta_0}}(q) = \frac{1}{\sigma^2} q \otimes q \in \Rb_+^{d \times d}$ is positive semidefinite by construction, and that it has a fixed sum $\sum_{i,j}\Ic_{\f_{\theta_0}}(q)_{i,j} = \frac{1}{\sigma^2} q_i q_j = \frac{c^2}{\sigma^2}$. 
    The convex closure of any set of information matrices inherits these properties, because the set of positive semidefinite matrices is convex, as is the set of matrices with a fixed sum, and the set of positive-valued matrices. 
    We can write this succinctly as $co(\Ic_{\f_{\theta_0}}(\Qc)) \subseteq \{M : \ M_{ij}\geq 0\ \forall i,j,\ \lambda_1,...\lambda_d \geq 0,\ \sum_{i,j}M_{i,j}=\frac{c^2}{\sigma^2}\}$, where $\lambda$ denotes an eigenvalue.
    We can use $\det(C) = \prod_{i=1}^d\lambda_i$, as well as $\mathrm{Tr}(C) =\sum_{i=1}^d \lambda_i= \sum_{i=1}^d C_{i,i} \leq \sum_{i,j}C_{i,j}=\frac{c^2}{\sigma^2}$ to conclude that the determinant is bounded by $\det(C) \leq \prod_{i=1}^d \frac{c^2}{d\sigma^2}$, as this is the point where all eigenvalues are equal, and the trace is maximal. Furthermore, since
    \begin{align}
        \det(\mathrm{I}_d\cdot \frac{c^2}{d\sigma^2}) &= \prod_{i=1}^d \frac{c^2}{d\sigma^2}, \\
        &\mathrm{and} \nonumber\\
        \mathrm{I}_d\cdot \frac{c^2}{d\sigma^2} &\in co(\Ic_{\f_{\theta_0}}(\Sc_c(\Xc))),
    \end{align}
    this supremum is attained within $\Sc_c(\Xc)$.
    To say it another way, the determinant, and hence the asymptotic variance, is maximized at $C^\star = \mathrm{I}_d\cdot \frac{c^2}{d\sigma^2}$. Because $C^\star \in co(\Ic_{\f_{\theta_0}}(\Sc_c(\Xc)))$, there can be nothing to gain in the asymptotic sense by lifting to $q \in \Mc_c^+(\Xc)$, and we may conclude
    \begin{equation}
        \sigma^2_{info}(\Sc_c(\Xc)) = \sigma^2_{info}(\Mc_c^+(\Xc))
    \end{equation}
\end{proof}

\section{Details on sparse synthetic oracle simulations}\label{app:fsp}

Our sparse oracle is created by sparsifying draws from our prior $\pi(\theta)$ and setting the maximum to one $\|\f\|_\infty=1$. 
The prior is described in \Cref{app:bnn-prior}.
We set $N=1$ and $w=1$, and then draw 10 independent repeats $\theta_{1:10} \sim \pi(\theta)$.
Afterwards, we construct the oracle by $\f(x) = \f_{\theta_j}(x) \cdot \Ib[ d_H(x,x^\star_{\theta_j}) \leq 1]$, where $d_H$ denotes the Hamming distance. 

Such molecule-activity maps $\f: \Xc \to [0,1]$ have the following qualities:
\begin{enumerate}
    \item Exactly one DNA sequence $x^\star$ has activity $\f(x^\star)=1$.
    \item Every single mutation has lower activity. $0 < \f(x) < \f(x^\star) \, \forall \, d_H(x,x^\star)=1$.
    \item Every double mutation abolishes activity. $\f(x)=0 \forall d_H(x,x^\star)>1$.
\end{enumerate}

For both LIDS and Thompson, we use the same hyperparameters of the model $\f_\theta$. 
We set the number of filters to $N=8$, regularization strength $\lambda=1$, and use $\pr(w)=\delta_{w=1}$ as the prior on $\|\f\|_\infty$.
Hyperparameters for EIG optimization and posterior approximation are described in \Cref{app:variational_dist_training,app:posterior_approx}, respectively.

\section{CI calculation}\label{app:CI}

95\% confidence intervals are calculated using a critical t-value and the delta method.
Let $\hat\f_n \triangleq \f(\hat x_n^\star)$ be the activity of the test molecules for the $N$ replicates, and $\f(x^\star)$ the known true maximum activity.
First, compute the standard error and mean.
\begin{align}
    \hat\mu &= \frac{1}{N}\sum_{n=1}^N\hat\f_n\\
    s^2 &= \frac{1}{N-1} \sum_{n=1}^N (\hat\f_n - \hat\mu)^2\\
    SE &= \frac{s}{\sqrt{N}}
\end{align}
This naive standard error (SE) can be very large, making our confidence interval go below 0, or above the maximum possible activity. To make the confidence interval (CI) stay bounded in $[0,\f(x^\star)]$, we use a logit transformation $g: [0,\f(x^\star)] \to \Rb$, along with the delta method.
Specifically, we use $g(y)\triangleq \log(\frac{y}{\f(x^\star) - y})$, which has derivative $g'(y) = \frac{\f(x^\star)}{y(\f(x^\star)-y)}$, and inverse $g^{-1}(y) = \frac{\f(x^\star)}{1 + \exp(-y)}$.
\begin{align}
    SE_{logit} &= g'(\hat\mu)\cdot SE\\
    CI_{logit} &= g(\hat\mu)\pm t_{N-1}(0.975)\cdot SE_{logit}\\
    CI &= g^{-1}(CI_{logit})
\end{align}
$t_{N-1}(0.975)$ is the $0.975$ quantile of the student-t distribution with $N-1$ degrees of freedom.
For the $N=5$ and $N=10$ applied in this article, this corresponds to $99.45\%$ and $97.63\%$ CI respectively, if we were to use the Gaussian distribution instead of a student-t.
We then plot the arithmetic mean $\hat\mu$, along with the $CI \subset [0,\f(x^\star)]$.

\section{Details on PLM oracle simulations}\label{app:fplm}

For the hyperparameters of the surrogate $\f_\theta$, we set the number of filters to $N=20$, regularization strength $\lambda=1$, and use $\pr(w)=1 + \mathrm{Gamma}(1,1)$ as the prior on $\|\f\|_\infty$.
Hyperparameters for training and posterior approximation are described in \Cref{app:variational_dist_training,app:posterior_approx}.

\subsection{Importance sampling $\langle \f, q \rangle$}\label{app:IS}

To simulate an experiment we need to evaluate $\langle \f, q \rangle = \Eb_{x \sim q(x)}[\f(x)]$. 
To do so we exploit the fact that $\pr(x)$ can be sampled from, as it is given by a PLM, which is a generative model.
We set a defensive mixture distribution $s(x) = \frac{1}{2} q(x) + \frac{1}{2}\pr(x)$ as the proposal distribution \citep{hesterberg_weighted_1995} and approximate the expectation by importance sampling.
\begin{equation}
    \langle \f, q \rangle \approx \frac{1}{N}\sum_{i=1}^N \frac{q(x^i)}{s(x^i)}\f(x^i),\quad x^i \sim s(x)
\end{equation}

\newpage
\section{Ablation of posterior approximation}
\begin{figure}[h]
    \vspace{-1em}
    \centering
    \includegraphics[width=0.45\linewidth]{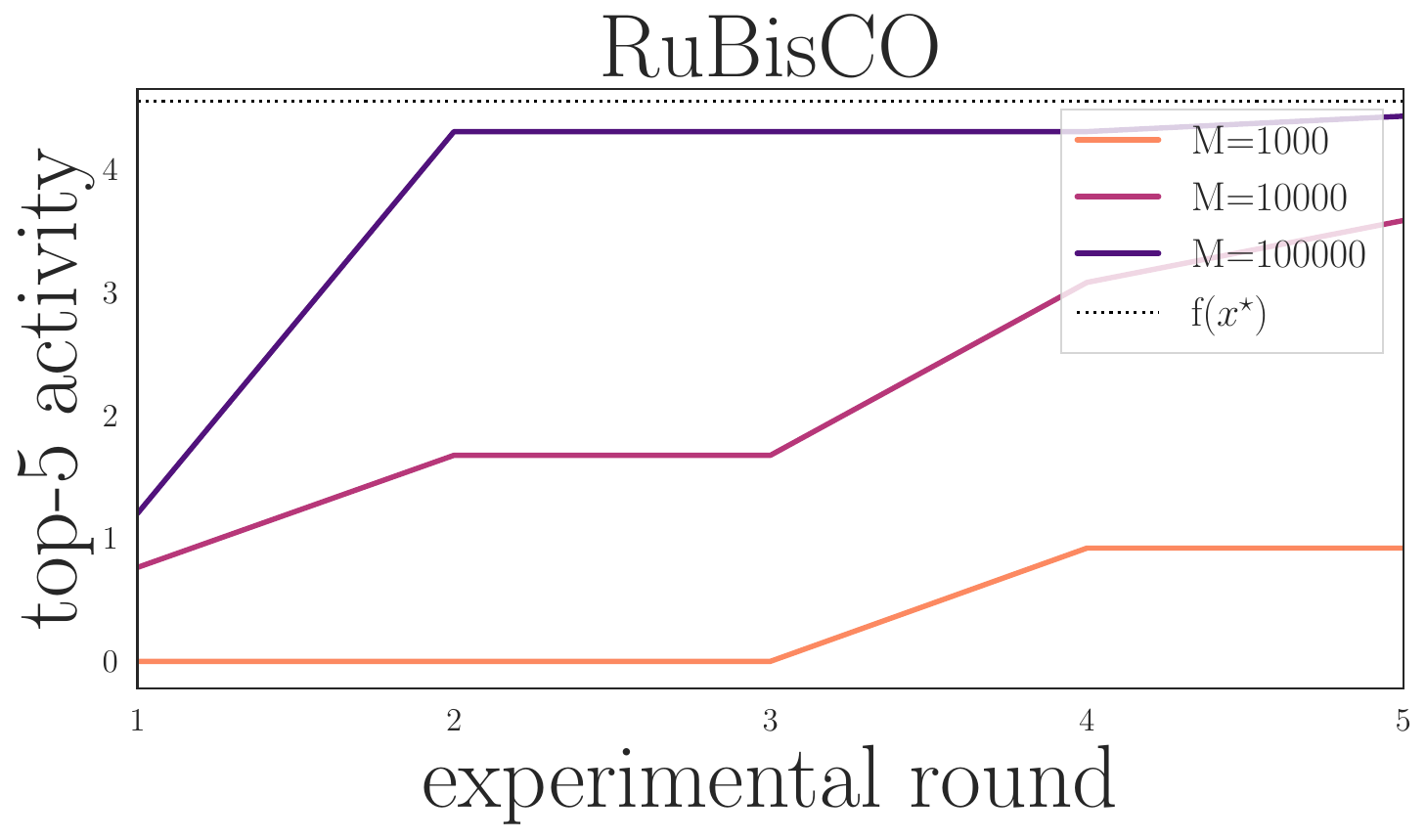}
    \caption{\textbf{Posterior approximation.} LIDS using different number of posterior samples $M$. The setup is the same as in \Cref{sec:res_plm}.}
    \label{fig:M-ablation}
    \vspace{-2em}
\end{figure}

\end{document}